\definecolor{olive}{rgb}{0.6, 0.6, 0.2}
\definecolor{sand}{rgb}{0.8666666666666667, 0.8, 0.4666666666666667}
\definecolor{wine}{rgb}{0.5333333333333333, 0.13333333333333333, 0.3333333333333333}
\definecolor{deblue}{RGB}{11,132,147}
\definecolor{ocra}{RGB}{204, 119, 34}
\theoremstyle{plain}
\newtheorem{theorem}{Theorem}[section]
\newtheorem{proposition}[theorem]{Proposition}
\theoremstyle{definition}
\theoremstyle{remark}
\def\eqref#1{equation~\ref{#1}}
\def\1{\bm{1}}
\DeclareMathAlphabet{\mathsfit}{\encodingdefault}{\sfdefault}{m}{sl}
\SetMathAlphabet{\mathsfit}{bold}{\encodingdefault}{\sfdefault}{bx}{n}
\begin{document}

\twocolumn[
\runningtitle{Generative Flow Ant Colony Sampler (GFACS)}
\runningauthor{Minsu Kim$^{*}$, Sanghyeok Choi$^{*}$, Hyeonah Kim, Jiwoo Son, Jinkyoo Park, Yoshua Bengio}
\aistatstitle{Ant Colony Sampling with GFlowNets for \\Combinatorial Optimization}

\aistatsauthor{ Minsu Kim$^{*}$ \And Sanghyeok Choi$^{*}$ \And Hyeonah Kim$^{\dagger}$ }

\aistatsaddress{ Mila, Universit\'e de Montr\'eal, KAIST \And KAIST \And Mila, Universit\'e de Montr\'eal   }

\aistatsauthor{Jiwoo Son \And Jinkyoo Park \And Yoshua Bengio}

\aistatsaddress{Omelet \And KAIST, Omelet \And Mila, Universit\'e de Montr\'eal  }

]

\begin{abstract}
We present the Generative Flow Ant Colony Sampler (GFACS), a novel meta-heuristic method that hierarchically combines amortized inference and parallel stochastic search. Our method first leverages Generative Flow Networks (GFlowNets) to amortize a \emph{multi-modal} prior distribution over combinatorial solution space that encompasses both high-reward and diversified solutions. This prior is iteratively updated via parallel stochastic search in the spirit of Ant Colony Optimization (ACO), leading to the posterior distribution that generates near-optimal solutions. Extensive experiments across seven combinatorial optimization problems demonstrate GFACS's promising performances.
\end{abstract}

\section{Introduction}
Combinatorial optimization (CO) aims to optimize an objective function over discrete, finite combinatorial structures under given constraints. It has broad applications in operations research~\citep{zhang2019review}, scientific discovery~\citep{naseri2020application} and machine learning~\citep{pmlr-vR0-chickering95a, chickering2004large}. Traditionally, integer programming approaches have been extensively studied to solve CO problems~\citep{miller1960integer}. While these methods offer guaranteed optimality, they often become impractical for large-scale instances due to their exponential complexity. An alternative is to design problem-specific heuristics, but it requires domain knowledge and labor-intensive design, limiting flexibility to variant problems.

\begin{table}[b!]
\vspace*{-3.75ex}
\footnotesize
\urlstyle{same}
\rule{0.8in}{0.4pt}\\[0.75ex]
* Equal contribution. $\dagger$ Work done while Hyeonah Kim was at KAIST. Code available at: \url{https://github.com/ai4co/gfacs}. Correspondence to: Minsu Kim (\textit{minsu.kim@mila.quebec}) and Sanghyeok Choi (\textit{sanghyeok.choi@kaist.ac.kr}).
\end{table}

Recent advancements in deep learning~\citep{lecun2015deep} brought alternative directions to CO. For a more detailed review, refer to our related works in~\cref{app:extended} and the survey by~\citet{bengio2021machine}. Among deep learning for CO methods, supervised learning (SL) approaches distill mathematical programming methods or heuristics into powerful deep neural networks, enabling fast and scalable solution generation~\citep{vinyals2015pointer, joshi2021learning, sun2023difusco, luo2023neural}. However, SL methods rely on expert-labeled data, limiting their applicability as general-purpose CO solvers.

Reinforcement learning (RL) is another popular approach to CO~\citep{khalil2017learning, kool2018attention, kwon2020pomo,ye2023deepaco}. 
Unlike supervised learning, RL methods train deep neural networks directly using the objective function of a CO problem as a reward, without requiring expert-labeled data. This makes RL methods particularly useful because they can adapt flexibly to a wide range of CO problems, even where specific heuristics are unavailable or unexplored. However, RL requires extensive exploration of vast solution space and often involves non-trivial credit assignments across multiple decisions, which introduces significant complexity to the learning process compared to SL~\citep{minsky1961steps}. 

Hierarchical methods that leverage RL in a two-stage problem-solving procedure tackle such exploration and credit assignment problems through hierarchical decomposition. Previous works by \citet{bello2016neural}, \citet{hottung2021efficient}, and \citet{son2023meta} pretrain large models on massive problem instances using RL and fine-tune these models on specific problem instances at test time using a method called active search (AS). \citet{ye2023deepaco} pretrain relatively small models with significantly reduced training times. 
They pretrain a neural network with RL to produce a prior distribution over solution space and iteratively update the distribution using ant colony optimization (ACO), an effective parallel stochastic search algorithm.

\textbf{Contribution.} In this work, we propose a novel probabilistic meta-heuristic called Generative Flow Ant Colony Sampler (GFACS), which leverages the synergy between a multi-modal amortized prior and iterative update of it using parallel stochastic search. Unlike previous approaches that use RL pretraining aimed at maximizing expected rewards, we employ Generative Flow Networks~\citep[GFlowNets;][]{bengio2021flow,bengio2023gflownet}, designed to learn a distribution where the probability of a sample is proportional to its reward, \textit{i.e.}, $p(x) \propto R(x)$. 
This leads to a multi-modal prior distribution of solutions, allowing it to benefit more from subsequent iterative posterior updates compared to the unimodal priors pretrained with reward maximizing RL, as illustrated in~\cref{fig:multi-modal}.
Moreover, to enhance and stabilize GFlowNet training, we introduce several new techniques that have been verified to be significantly more effective than existing GFlowNet training methods.

Our posterior update rules follow the most progressive methods of ant colony optimization (ACO)~\citep{ye2023deepaco}, creating synergies between our multi-modal amortized prior and their iterative refinement via parallel stochastic search. Additionally, we also validate our method with another posterior search method, active search~\citep[AS;][]{hottung2021efficient}, demonstrating the flexibility of our approach.

\begin{figure}
    \centering
    \includegraphics[width=0.85\linewidth]{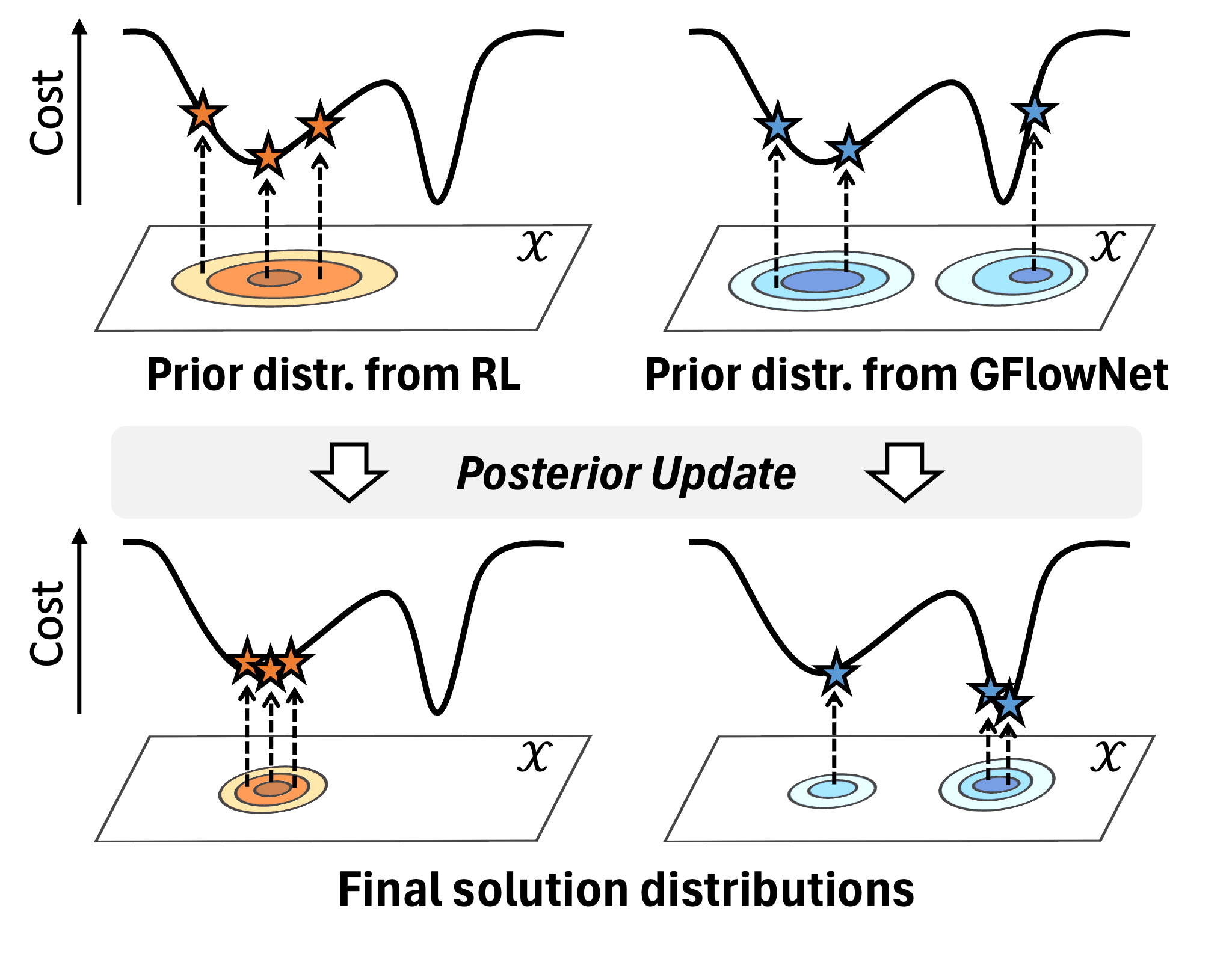}
    \vspace{-1em}
    \caption{Synergy of multi-modal prior distribution of solutions and iterative posterior update with parallel stochastic search in combinatorial optimization.}
    \label{fig:multi-modal}
    \vspace{-1.2em}
\end{figure}

Through extensive experiments, we demonstrate the superiority of our training methods over general-purpose reinforcement learning algorithms for providing effective prior distribution on ACO search across seven combinatorial optimization benchmarks:
\begin{itemize}[left=0pt,nosep]
    \item \textbf{Routing problems}:
    \begin{itemize}[left=0pt,nosep]
        \item Traveling Salesman Problem (TSP)
        \item Capacitated Vehicle Routing Problem (CVRP)
        \item CVRP with Time Windows (CVRPTW)
        \item Prize-Collecting TSP (PCTSP)
        \item Orienteering Problem (OP)
    \end{itemize}
    \item \textbf{Scheduling problem}: Single Machine Total Weighted Tardiness Problem (SMTWTP)
    \item \textbf{Grouping problem}: Bin Packing Problem (BPP)
\end{itemize}

For problems such as the Traveling Salesman Problem (TSP) and the Capacitated Vehicle Routing Problem (CVRP), we compare our GFACS method against problem-specific reinforcement learning algorithms that utilize significantly larger and specialized deep networks.

\section{Preliminary}
\vspace{-0.3em}
\subsection{Traveling salesman problem}

The traveling salesman problem (TSP) is a fundamental combinatorial optimization problem, which serves as a building block for more complex CO problems like the capacitated vehicle routing problem and orienteering problem.
In light of this and for simplicity, this paper presents formulations using the two-dimensional Euclidean TSP as an illustrative example. 

A TSP instance is defined on a (fully connected) graph $\mathcal{G} = (\mathcal{V},\mathcal{D})$, where $\mathcal{V}$ and $\mathcal{D}$ are a set of nodes and edges, respectively. Each node $i \in \mathcal{V}$ has two-dimensional coordinates $v_i \in [0, 1]^{2}$ as a feature, and each edge $(i, j) \in \mathcal{D}$ has a feature $d_{i,j}$ which is the distance between node $i$ and $j$, i.e., $d_{i,j} = \Vert v_i - v_j \Vert_2$.
A solution to TSP, $x$ corresponds to a Hamiltonian cycle for a given problem instance $\mathcal{G}$, which can be represented as a permutation of nodes $\pi = (\pi_1, \ldots, \pi_N)$. The tour length is computed as $\mathcal{E}(x;\mathcal{G}) = d_{\pi_N,\pi_1} + \sum_{t=1}^{N-1}d_{\pi_t,\pi_{t+1}}$. The optimization objective for TSP is to find $x$ that minimizes $\mathcal{E}(x; \mathcal{G})$.

\begin{figure*}[t!]
    \centering
    \includegraphics[width=0.8\linewidth]{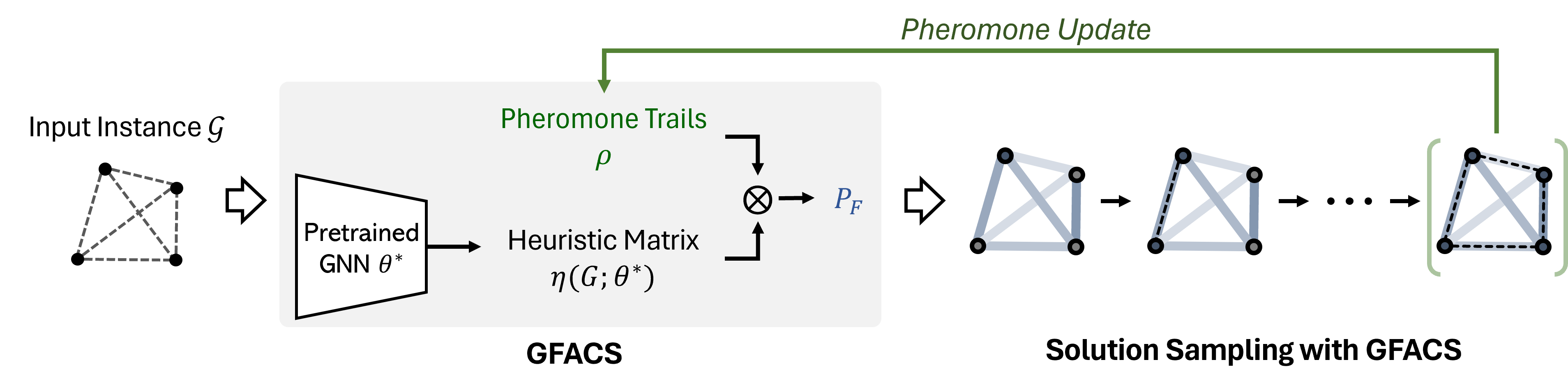}
    \caption{Solution sampling process of GFACS. The GNN, pretraind with a GFlowNet loss, serves as an expert heuristic to pick the next action to construct a solution, such as a tour in the TSP.}
    \vspace{-0.7em}
    \label{fig:GFACS_pipeline}
\end{figure*}

\subsection{Generative flow networks}
\label{sec:gfn}
Generative Flow Networks, or GFlowNets \citep{bengio2021flow,bengio2023gflownet}, provide an effective training framework for learning a policy that samples compositional objects $x \in \mathcal{X}$. The objective is to generate $x$ with probability proportional to a given non-negative reward function $R(x)$. GFlowNets follow a constructive generative process, using discrete \textit{actions} to iteratively modify a \textit{state} which represents a partially constructed object. All the ways to construct these objects can be described by a directed acyclic graph (DAG), \(\mathcal{H}=(\mathcal{S},\mathcal{A})\), where \(\mathcal{S}\) is a finite set of all possible states, and \(\mathcal{A}\) is a subset of \(\mathcal{S}\times\mathcal{S}\), representing directed edges. Within this framework, we define the \textit{children} of state \(s\in\mathcal{S}\) as the set of states connected by edges whose head is \(s\), and the \textit{parents} of state \(s\) as the set of states connected by edges whose tail is \(s\).

We define a (complete) \textit{trajectory} \(\tau = (s_0 \rightarrow \ldots \rightarrow s_N ) \in \mathcal{T}\) from the initial state \(s_0\) to terminal state \(s_n\). We define the \textit{forward policy} to model the forward transition probability \(P_F(s'|s)\) from \(s\) to its child \(s'\). Similarly, we also consider the \textit{backward policy} \(P_B(s|s')\) for the backward transition \(s' \dashrightarrow s\), where \(s\) is a parent of \(s'\). We can use the \textit{forward policy} to compute in a forward way the probability of a trajectory: $P_F(\tau;\theta) = \prod_{t=1}^n P_F\left(s_t|s_{t-1};\theta\right)$. Similarly, we can use the \textit{backward policy} to compute in a backward way the probability of trajectory ending at a given object $x=s_n$: $P_B(\tau|x;\theta) = \prod_{t=1}^n P_B\left(s_{t-1}|s_t;\theta\right)$.

\paragraph{Trajectory balance \citep{malkin2022trajectory}.} Trajectory balance (TB) is the most widely used training objective for GFlowNets. It works with three key components: a learnable scalar $Z_{\theta}$ for the partition function, a learnable forward policy \(P_F(\tau;\theta)\), and a backward policy \(P_B(\tau|x)\) that can either be learnable or fixed with a predefined configuration~\citep{zhang2022generative}. TB loss to minimize for any trajectory $\tau$ is defined as:
\begin{equation}
\mathcal{L}_{\text{TB}}(\tau;\theta) = \left(\log \frac{Z_\theta P_F(\tau;\theta)}{R(x)P_B(\tau|x)}\right)^2.
\label{eq:tb_loss}\end{equation}
When the trajectory balance loss is minimized to zero for all complete trajectories, the probability of sampling $x$ with the forward policy is \(P_{F}^{\top}(x) \propto R(x)\), as desired, where $P_{F}^{\top}(x) = \sum_{(\tau \rightarrow x) \in \mathcal{T}} P_F(\tau)$ with $(\tau \rightarrow x)$ indicating that the terminal state of $\tau$ is $x$.

\subsection{Ant colony optimization} \label{sec:aco}

Ant colony optimization~\citep[ACO;][]{dorigo2006aco} is a meta-heuristic to refine the distribution over solution space by an iterative procedure of solution construction and pheromone update. Given a problem instance $\mathcal{G}$, a corresponding heuristic matrix $\eta(\mathcal{G}) \in \mathbb{R}^{N \times N}$ and a pheromone trails $\rho \in \mathbb{R}^{N \times N}$, a solution is constructed by auto-regressively sampling edges from the probability distribution:
\begin{equation} \label{eq:aco_prob}
   P(\pi_{t+1} = j|\pi_{t} = i; \rho,\eta(\mathcal{G})) \propto \begin{cases} \rho_{i,j}\eta_{i,j}(\mathcal{G}) & \text{if $j \notin \pi_{1:t}$ }\\ 0 & \text{otherwise},
   \end{cases}  
\end{equation}
where $\rho_{i,j}$ and $\eta_{i,j}(\mathcal{G})$ are the pheromone weight and heuristic weight for edge $d_{i,j}$, respectively.

At each iteration, $K$ artificial ants independently sample solutions $\{\pi^k\}_{k=1}^K$ by following Eq.(\ref{eq:aco_prob}). The pheromone trails $\rho$ are then adjusted using the sampled solutions and their corresponding costs, thereby updating the solution distribution. The exact pheromone update rules depend on the specific ACO algorithm being used.
As a representative example, in Ant System \citep{ant-sys}, all $K$ solutions contribute to the pheromone update as follows:
\begin{equation} \label{eq:ant_system}
    \rho_{i,j} \leftarrow (1 - \gamma)\rho_{i,j} + \sum\limits_{k=1}^{K} \Delta^k_{i,j},
\end{equation}
where $\gamma$ is the decaying coefficient (or evaporation rate) and $\Delta^k_{i,j}$ is the amount of pheromone laid by the $k$-th ant on the edge $(i, j)$. The value of $\Delta^k_{i,j}$ is proportional to the inverse of tour length if edge $(i, j)$ is part of $\pi^k$, and zero otherwise:
\begin{equation} \label{eq:as_pheromone}
       \Delta^k_{i,j} = \begin{cases} C / f(\pi^k) & \text{if $(i, j) \in \pi^k$} \\ 0 & \text{otherwise}, \end{cases}
\end{equation}
where $C$ is a constant. The smaller the cost $f(\pi^k)$, the more pheromone is laid on the edges of $\pi^k$. This mechanism encourages future ants to follow promising paths that lead to high-quality solutions.

Note that the heuristic matrix $\eta(\mathcal{G})$ is not updated during the search process; it only serves as an initial \textit{prior}. This prior is augmented with the pheromone $\rho$ that is typically initialized as a constant matrix and then iteratively updated, resulting in the posterior solution distribution that converges to the Dirac-delta distribution of optimal solution under certain conditions~\citep{dorigo1997ant, dorigo2005ant}.

In DeepACO \citep{ye2023deepaco}, the learned graph neural network (GNN), parameterized by $\theta$, is utilized to construct the heuristic matrix $\eta(\mathcal{G},\theta)$ for a given problem instance $\mathcal{G}$, providing an informative prior for guiding probabilistic solution search process of ACO. The limitation of the approach is that they use REINFORCE~\citep{williams1992simple} for GNN training, which suffers from mode-collapse, restricting further improvement from the subsequent iterative pheromone update. Our work seeks to overcome this limitation by incorporating a diversified, multi-modal solution distribution, which shares similar motivation with approximate global optimization methods such as simulated annealing \citep{kirkpatrick1983optimization}.

\section{Generative flow ant colony sampler}

In this section, we provide the inference procedure for the Generative Flow Ant Colony Sampler (GFACS). GFACS enhances Ant Colony Optimization (ACO) by integrating it with multi-modal prior from GFlowNet-based training.

\subsection{MDP formulation} \label{sec:mdp}

We illustrate the solution construction procedure as a Markov decision process (MDP), using the Traveling Salesman Problem (TSP) as an example. This formulation can be easily adapted to general combinatorial optimization problems with minor modifications.

\textbf{State $s$:} The initial state $s_0$ is an empty set. The state $s_t$ at $0<t<n$ is a partial combinatorial solution presented with previous actions: $\{a_0, \ldots, a_{t-1}\}$ and $s_n$ is a completed solution.

\textbf{Action $a$:} The action $a_t$ is a selection of the next node to visit at time $t$ among un-visited nodes: $a_t \in \{1,\ldots,N\}\setminus \{a_0,\ldots,a_{t-1}\}$. If all nodes are visited, a deterministic terminating action is applied, which maps a final state $s_N$ into a solution, $x$.

\textbf{Reward $R$:} The reward is given by $R(x;\mathcal{G},\beta) = e^{-\beta \mathcal{E}(x;\mathcal{G})}$. Here, $\mathcal{E}(x;\mathcal{G})$ represents the energy of the solution $x$, which corresponds to metrics such as the tour length in the TSP. The inverse temperature $\beta$ is a hyperparameter. For simplicity, we will omit $\mathcal{G}$ from $R$ and $\mathcal{E}$ except when its inclusion is necessary for clarity.

\subsection{Policy configuration for prior modeling} \label{sec:policy}

We parameterize the forward policy $P_F$ using a graph neural network (GNN) with parameter $\theta$ that maps an input graph instance $\mathcal{G}$ to the output edge matrix $\eta(\mathcal{G};\theta)$, following \citet{ye2023deepaco}. For simplicity, we use notation $\eta(\mathcal{G})$ without $\theta$. With a given $\eta(\mathcal{G})$, the forward policy when $t < N$ is expressed as follows:
\begin{equation} \label{eq:policy}
    P_F(s_{t+1}|s_t; \eta(\mathcal{G}), \rho)
    \propto \begin{cases} \rho_{a_{t-1},a_{t}}\eta_{a_{t-1},a_{t}}(\mathcal{G}) & \text{if $a_t \notin s_{t}$ }\\ 0 & \text{otherwise} \end{cases}
\end{equation}
Note that $\rho$ and $\eta(\mathcal{G})$ denote the pheromone trails and learned heuristic matrix in ACO. Subsequently, we can define a trajectory from the initial state to a terminal solution, with $\tau = (s_0, \ldots, s_N, x)$, and the forward probability of that trajectory as
\begin{equation} \label{eq:forward_traj}
    P_F(\tau;\eta(\mathcal{G}), \rho) = \prod_{t=0}^{N-1} P_F(s_{t+1}|s_t; \eta(\mathcal{G}), \rho).
\end{equation}
The forward probability for the transition from $s_N$ to $x$ is omitted since it is deterministic. During training, we fix $\rho$ to $\mathbf{1}$, and thus we omit it from subsequent sections unless its presence is required for clarity.

For the backward policy, we first define $h(x)$ as the number of possible trajectories that yield the solution $x$, and we set $P_B(\tau|x) = \frac{1}{h(x)}$. This setting emulates the maximum entropy GFlowNets~\citep{zhang2022generative}, treating all symmetric trajectories that terminate in the same solution equally. In the case of the TSP, where $h(x)=2N$, we have $P_B=\frac{1}{2N}$, thereby imposing a uniform backward distribution across all $2N$ feasible trajectories leading to the same Hamiltonian cycle $x$. It is important to note that the number of symmetric trajectories varies depending on the specific CO problem; for a more detailed discussion on solution symmetry across various CO problems, please refer to \Cref{app:symmetric}

\subsection{ACO update on learned prior}

In the sampling process, we assume the existence of the learned GNN parameterized with $\theta^*$; see \cref{sec:training} for details of the training procedure. 
This learned GNN provides the prior parameter $\eta^{*}(\mathcal{G})$ for the ACO algorithm. The next step is to update the pheromone parameter $\rho$, using $K$ agents (ants) to search over samples drawn from the amortized prior $P_F(\tau; \eta^{*}(\mathcal{G}))$ as described in~\cref{sec:aco}. 

Since the trained $\eta^{*}(\mathcal{G})$ samples diverse, high-reward (\textit{i.e.}, low-energy) candidates, the distribution $P_F$ provides strong prior candidates for the ants in the ACO algorithm. This learned prior is expected to highly improve the likelihood of discovering the global optimum after the iterations of the pheromone update.

The overall process is illustrated in \Cref{fig:GFACS_pipeline}.

\section{Off-policy training of GFACS} \label{sec:training}
This section introduces how to train GFACS, especially the GNN component parameterized with $\theta$, in the GFlowNet framework. 
GFACS is trained via iterations of the off-policy experience collection (\cref{sec:stepA}) and training (\cref{sec:stepB}).

\subsection{Off-policy experience collection} \label{sec:stepA}

\begin{figure}[t]
    \centering
    \includegraphics[width=0.9\linewidth]{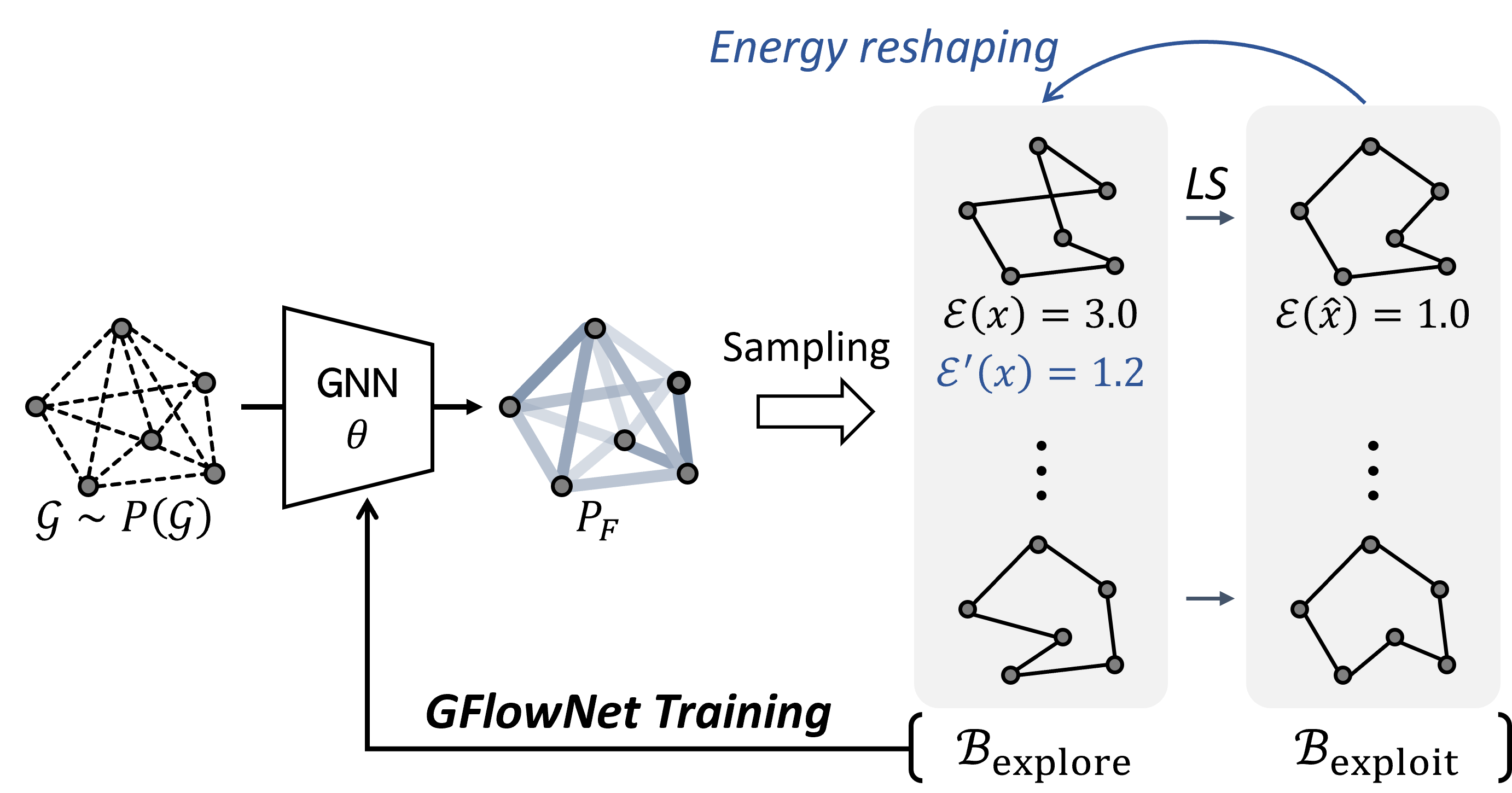}
    \caption{Experience collection procedure. Energy reshaping compensates for the energy of underrated samples that have potential to become low-energy samples after the local search.}
    \label{fig:GFACS_training}
\end{figure}

In this step, we collect experiences on given graph instances. The collected experiences will be used to train GFACS, as described in~\cref{sec:stepB}. At each training round, we first sample $M$ problem instances from a random instance generator $P(\mathcal{G})$, i.e., $\mathcal{G}_1, \ldots, \mathcal{G}_{M} \sim P(\mathcal{G})$. Details of instance generation are described in \cref{app:problems}. For each problem instance $\mathcal{G}_m$, we collect $K$ solutions by following Eq.(\ref{eq:forward_traj}), resulting in a batch of ``exploration experiences,''
\begin{equation*}
\mathcal{B}_{\text{explore}} = \left\{ \left( \mathcal{G}_m, \tau_{m, k}, \mathcal{E}(x_{m,k}) \right) \mid m \in [M], k \in [K] \right\}.
\end{equation*}

To exploit high-reward regions in the large-scale combinatorial space, we collect additional $K$ experiences for each $\mathcal{G}_m$ using a local search (LS) operator, such as 2-opt for TSP, as a powerful behavior policy. 
The LS operator directly refines a given solution into a locally optimized one, \textit{i.e.}, $\hat{x} \leftarrow \text{LS}(x)$. 
We then use the backward policy $P_B$ to sample a trajectory $\hat{\tau}$ for each $\hat{x}$, 
obtaining a batch of ``exploitation experiences,'' 
\begin{equation*}
\mathcal{B}_{\text{exploit}} = \left\{ \left( \mathcal{G}_m, \hat{\tau}_{m, k}, \mathcal{E}(\hat{x}_{m,k}) \right) \mid m \in [M], k \in [K] \right\}.
\end{equation*}

For problems that lack domain-specific LS operators, we also suggest a general-purpose destroy-and-repair method that partially destroys a solution and then repairs; see \cref{append:destroy-and-repair} for more details.

While the concept of using local search for exploitation has been explored in recent GFlowNet works~\citep{kim2023local}, our distinct approach involves utilizing the local search operation to facilitate \textit{hindsight observation}—a forward-looking insight derived from extending the search beyond the current state. Methodologically, we compensate for the solutions that were initially overlooked due to high energy (\textit{i.e.}, low reward) but that can be readily transformed into low energy (\textit{i.e.}, high reward) solutions through local search.

To achieve this, we modify the energy (in a way we call \textit{energy reshaping}) of $x$ in the \textit{exploration} experiences, $\mathcal{B}_{\text{explore}}$. Specifically, for a given solution $x\in\mathcal{B}_{\text{explore}}$ and its corresponding refined solution $\hat{x} \gets \text{LS}(x)$, we adjust the energy of $x$ as follows:
\begin{equation} \label{eq:energy_reshaping}
\mathcal{E}'(x) = \alpha \mathcal{E}(\hat{x}) + (1-\alpha) \mathcal{E}(x).
\end{equation}
This reshaping captures the potential for samples to become low-energy solutions after local search.

The overall procedure of experience collection via sampling and local search, along with the energy reshaping, is illustrated in \cref{fig:GFACS_training}.

\subsection{Training with collected experiences} \label{sec:stepB}

Given a batch of experiences $\mathcal{B}$, the trajectory balance (TB) loss in Eq.(\ref{eq:tb_loss}) can be rewritten as
\begin{equation} \label{eq:tb_ours}
   L(\mathcal{B}) = \frac{1}{MK}\sum_{m=1}^{M} \sum_{k=1}^{K} \left( \log \frac{ P_F(\tau_{m, k};\eta(\mathcal{G}_m))Z(\mathcal{G}_m)}{(1/h(x_{m,k}))R(x_{m, k}; \mathcal{G}_m, \beta)} \right)^2,
\end{equation}
where $1/h(x_{m,k})$ is to the uniform backward policy as  described in \cref{sec:policy}. Since the loss is computed over multiple instances $\mathcal{G}_m$, the objective corresponds to the conditional trajectory balance.

\begin{figure*}[t]
     \centering
     \includegraphics[width=1.0\textwidth]{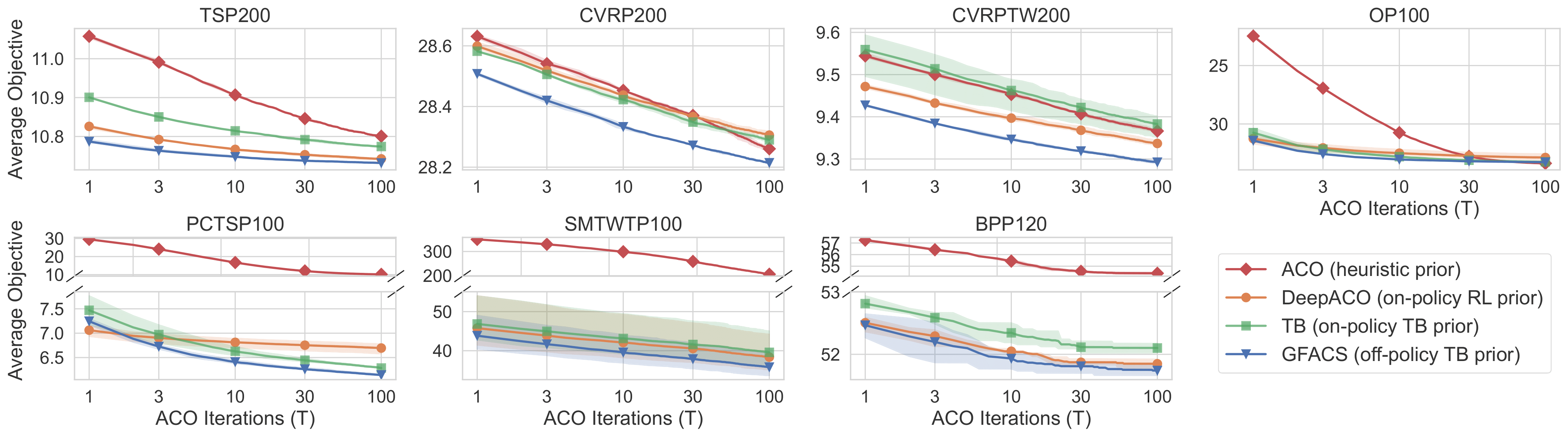}
    \vspace{-1.5em}
    \caption{Results of ACO algorithms with different priors on various CO tasks. Our GFACS outperforms every ACO baseline. The results are averaged over 3 independent models evaluated on the held-out test sets, and the shade indicates the min-max range of the 3 models.}
    \label{figure:aco-exp}
\end{figure*}

As discussed in~\cref{sec:gfn}, our target sampling distribution is proportional to reward, \textit{i.e.}, $p(x;\mathcal{G}) \propto R(x;\mathcal{G}, \beta)$. However, in most CO problems, including TSP, the scale of $R(x;\mathcal{G}, \beta)$ can vary significantly across instances, possibly leading to unstable training due to the high variance of logarithms of rewards. Similar to previous works that stabilize REINFORCE training \citep{kwon2020pomo,kim2022sym}, we normalize the energy $\mathcal{E}$ using the sample mean over $K$ samples for the same $\mathcal{G}_m$ as follows:
\begin{equation} \label{eq:shared}
    \tilde{\mathcal{E}}(x_{m, k};\mathcal{G}_m) = \mathcal{E}(x_{m, k};\mathcal{G}_m) - \frac{1}{K}\sum_{k=1}^{K} \mathcal{E}(x_{m, k};\mathcal{G}_m).
\end{equation}
Accordingly, we have $\tilde{R}(x_{m, k};\mathcal{G}_m, \beta) = e^{-\beta \tilde{\mathcal{E}}(x_{m, k};\mathcal{G}_m)}$. We call this technique a \textit{shared energy normalization}. Note that this technique is applied independently to the two batches, $\mathcal{B}_{\text{explore}}$ and $\mathcal{B}_{\text{exploit}}$, as the energy distribution between the two batches is often significantly different from each other, especially when a heuristic local search is applied. By applying shared energy normalization to both batches, we obtain $\tilde{\mathcal{B}}_{\text{explore}}$ and $\tilde{\mathcal{B}}_{\text{exploit}}$, in which every $\mathcal{E}$ is replaced by $\tilde{\mathcal{E}}$. 

The final loss to minimize is then defined as
\begin{equation}
    \mathcal{L}(\theta):= \frac{1}{2}L( \mathcal{B}_{\text{explore}}) + \frac{1}{2}L(\mathcal{B}_{\text{exploit}}), 
\end{equation}
where $L$ is as defined in Eq.(\ref{eq:tb_ours}). Note the parameters $\theta$ govern the functions $Z(\mathcal{G}_m)$ and $P_F(\cdot; \eta(\mathcal{G}_m))$, although this dependence has been omitted for simplicity. The gradient of the loss with respect to $\theta$ is given by $\nabla_{\theta}\mathcal{L}(\theta)$, and the parameters are optimized using stochastic gradient descent methods.

\section{Experiments}

This section presents experimental results to validate our algorithm. We compare GFACS against other ACO algorithms with different priors and also against competitive reinforcement learning algorithms for vehicle routing problems. Moreover, we conducted an ablation study and an empirical comparison with previous GFlowNets training methods. Last but not least, we validate the efficacy of multi-modal prior with another posterior update method beyond ACO, the active search (AS). Below, we provide a brief overview of the experimental setup.

\textbf{Implementation.} GFACS is built upon the implementation established by DeepACO~\citep{ye2023deepaco}. As our methodology mainly focuses on improving training algorithms rather than network architecture, we simply adopt the GNN architecture proposed in DeepACO, except the additional parameters for partition function $Z(\cdot;\theta)$ in Eq.(\ref{eq:tb_ours}). For detailed network architecture, hyperparameters, and hardware settings, please refer to \Cref{app:implement}. It is worth noting that one important hyperparameter that GFACS newly introduces is the inverse energy temperature $\beta$, which enables us to control the diversity-optimality trade-offs (see \Cref{sec:beta_tradeoff} for detailed analysis). In addition, we employ Ant System (\cref{sec:aco}) for the pheromone updates throughout the experiments. We provide experimental results with other ACO variants in \Cref{append:acovariants}.

\textbf{Test settings.} Each algorithm is tested on the held-out datasets, each with 100 instances generated in advance except TSP, where we obtain the TSP test dataset with 128 instances from \citet{ye2023deepaco}. Unless otherwise stated, the results presented throughout this section are the averaged values from three independent models, each trained with a distinct seed.

\begin{table}[t]
    \centering
    \caption{Experimental results on real-world datasets, TSPLib and CVRPLib. ‘$N$’ denotes the range of instance size and ‘\#’ denotes the number of instances in each set. We report the average optimality gap using the best-known costs.}
    \vspace{-0.5em}
    \resizebox{0.95\linewidth}{!}{



\begin{tabular}{clcccc}
\midrule
& \multicolumn{1}{c}{$N$} & \# & ACO & DeepACO & GFACS (ours) \\
\midrule
\parbox[t]{2mm}{\multirow{3}{*}{\rotatebox[origin=c]{90}{\scriptsize{TSPLib}}}} & 100-299 & 30& 1.71\%& 1.25\%& \textbf{1.21}\% \\
& 300-699 & 10 & 4.26\% &2.70\% & \textbf{2.60}\% \\
& 700-1499 & 12 &7.01\% &4.08\% & \textbf{3.88}\% \\
\midrule
\parbox[t]{2mm}{\multirow{3}{*}{\rotatebox[origin=c]{90}{\scriptsize CVRPLib}}} & 100-299 & 43 &\textbf{2.50}\% &3.27\% & 2.69\% \\
& 300-699 & 40 &3.75\% &4.22\% & \textbf{3.71}\% \\
& 700-1001 & 17 &4.64\% &5.07\% & \textbf{4.32}\% \\

\bottomrule
\end{tabular}

}
    \label{tab:lib}
    \vspace{-0.5em}
\end{table}

\begin{table*}[t]
    \centering
    \caption{Experimental results on TSP and CVRP. Results for methods with {*} are drawn from \citet{ye2023deepaco}, \citet{cheng2023select}, and \citet{jin2023pointerformer}. The reported objective value is averaged over 128 instances of TSP and 100 instances of CVRP. We report the average computation time per instance.
    }
    \vspace{-0.5em}
    \resizebox{1.0\textwidth}{!}
    {\begin{tabular}{lcccccclcccccc}
\cmidrule[1.0pt](lr{0.1em}){1-7}
\cmidrule[1.0pt](lr{0.1em}){8-14}
 & \multicolumn{3}{c}{TSP500} & \multicolumn{3}{c}{TSP1000} &
 & \multicolumn{3}{c}{CVRP500} & \multicolumn{3}{c}{CVRP1000} 
\\ 
\cmidrule[0.5pt](lr{0.2em}){2-4}
\cmidrule[0.5pt](lr{0.2em}){5-7}
\cmidrule[0.5pt](lr{0.2em}){9-11}
\cmidrule[0.5pt](lr{0.2em}){12-14}
Method & Obj. & Gap(\%) & Time
 & Obj. & Gap(\%) & Time &
Method & Obj. & Gap(\%) & Time
 & Obj. & Gap(\%) & Time 
\\ 
\cmidrule[1.0pt](lr{0.1em}){1-7}
\cmidrule[1.0pt](lr{0.1em}){8-14}
Concorde & 16.55 & - & 10.7s & 23.12 & - & 108s &
PyVRP & 62.96 & - & 21.0m & 119.29 & - & 1.4h 
\\
LKH-3 & 16.55 & 0.02 & 31.7s & 23.14 & 0.10 & 102s &
LKH-3 & 63.07 & 0.18 & 18.6m & 119.52 & 0.20 & 2.1h
\\
\cmidrule(lr{0.1em}){1-1}
\cmidrule(lr{0.1em}){2-4}
\cmidrule(lr{0.1em}){5-7}
\cmidrule(lr{0.1em}){8-8}
\cmidrule(lr{0.1em}){9-11}
\cmidrule(lr{0.1em}){12-14}
AM{*} & 21.46 & 29.07 & 0.6s & 33.55 & 45.10 & 1.5s 
\\
POMO{*} & 20.57 & 24.40 & 0.6s & 32.90 & 42.30 & 4.1s &
POMO & 68.69 & 9.08 & 2s & 145.74 & 22.14 & 3s
\\
\qquad+EAS & 18.25 & 10.29 & 300s & 30.77 & 33.06 & 600s & 
\qquad+EAS & 64.76 & 2.85 & 5.0m & 126.25 & 5.83 & 10.0m
\\
\qquad+SGBS & 18.86 & 13.98 & 318s & 28.58 & 23.61 & 605s &
\qquad+SGBS & 65.44 & 3.93 & 5.2m & 127.90 & 7.21 & 10.1m  
\\
DIMES{*} & 17.01 & 2.78 & 11s & 24.45 & 5.75 & 32s &
Sym-NCO  & 68.81 & 9.28 & 2s & 141.82 & 18.86 & 3s 
\\
SO{*} & 16.94 & 2.40 & 15s & 23.77 & 2.80 & 26s &
\qquad+EAS & 64.63 & 2.65 & 5.0m & 125.58 & 5.27 & 10.0m 
\\
Pointerformer{*} & 17.14 & 3.56 & 14s & 24.80 & 7.90 & 40s &
\qquad+SGBS  & 65.40 & 3.87 & 5.2m & 127.53 & 6.90 & 10.1m 
\\
\cmidrule(lr{0.1em}){1-1}
\cmidrule(lr{0.1em}){2-4}
\cmidrule(lr{0.1em}){5-7}
\cmidrule(lr{0.1em}){8-8}
\cmidrule(lr{0.1em}){9-11}
\cmidrule(lr{0.1em}){12-14}
ACO & 17.38 & 5.04 & 14s & 24.76 & 7.09 & 57s &
ACO & 64.62 & 2.63 & 36s & 122.82 & 2.96 & 1.3m 
\\
DeepACO & 16.84 & 1.77 & 15s & 23.78 & 2.87 & 66s &
DeepACO & 64.49 & 2.43 & 36s & 122.15 & 2.40 & 1.3m
\\ 
GFACS (ours) & \textbf{16.80} & \textbf{1.56} & 15s & \textbf{23.72} & \textbf{2.63} & 66s &
GFACS (ours) & \textbf{64.19} & \textbf{1.95} & 36s & \textbf{121.80} & \textbf{2.11} & 1.3m
\\
\cmidrule[1.0pt](lr{0.1em}){1-7}
\cmidrule[1.0pt](lr{0.1em}){8-14}
\end{tabular}
}
    \vspace{-0.5em}
    \label{tab:tsp_nls}
\end{table*}

\subsection{Comprison with different priors}
\label{sec:exp-vs-other-aco}

\textbf{Baselines.} We compare GFACS against the vanilla ACO (heuristic prior) and the DeepACO \citep{ye2023deepaco} (on-policy RL prior) and ACO with TB prior as baseline methods. We aim to show our multi-modal prior from our off-policy GFlowNets training methods is beneficial when this is integrated with the iterative pheromone update of ACO. We use $K=100$ ants and $T=100$ ACO iterations for all algorithms.

\textbf{Problems.} We evaluate our algorithm in a range of CO problems, including five routing problems -- TSP, Capacitated Vehicle Routing Problem (CVRP), CVRP with Time Window (CVRPTW), Prize-Collecting TSP (PCTSP), and Orienteering Problem (OP) --, a scheduling problem, the Single Machine Total Weighted Tardiness Problem (SMTWTP), and a grouping problem, the Bin Packing Problem (BPP). Detailed descriptions and configurations for each problem can be found in~\cref{app:problems}. For TSP, CVRP and CVRPTW, we use local search heuristics (2-opt, Swap* and \texttt{LocalSearch} operator in PyVRP \citep{Wouda_Lan_Kool_PyVRP_2024}, respectively) with perturbation scheme similar to \citet{ye2023deepaco} for training GFACS and testing all the ACO algorithms. For others, we use the destroy-and-repair method. In \cref{append:destroy-and-repair}, further details are provided.

\textbf{Results.} 
\cref{figure:aco-exp} shows GFACS consistently outperforms both the classic ACO algorithm and DeepACO across all benchmark problems. Notably, for routing problems, GFACS surpasses DeepACO by a significant margin. This is particularly remarkable given that both algorithms use identical neural networks and local search operators, underscoring the importance of the multi-modal prior. Additionally, the proposed off-policy training method enhances GFACS performance, demonstrating its effectiveness for GFlowNets.

\textbf{Evaluation on real-world datasets.} 
We compare the performance of our model against baseline models on real-world TSP and CVRP instances from the TSPLib \citep{reinelt1991tsplib} and CVRPLib X-instances \citep{uchoa2017new}. The models were trained on random uniform instances of TSP/CVRP with sizes 200 for dataset sizes 100-299, 500 for sizes 300-699, and 1000 for sizes 700-1499, with all other settings remaining the same. \Cref{tab:lib} presents the results. GFACS outperforms the baselines in TSPLib and large CVRPLib instances but slightly underperforms compared to vanilla ACO on small CVRPLib instances, where ACO with Swap* local search suffices for small-scale CVRP. As the problem scale increases, ACO's performance declines, underscoring the importance of effective priors using GFlowNets for efficient pheromone posterior updates in large combinatorial spaces.

\subsection{Comparison with RL solvers}
\label{sec:exp-vs-nco}

In this section, we present experimental results comparing our method with problem-specific RL approaches that leveraged specialized neural architectures or learning heuristics for the TSP and CVRP.

\textbf{Baselines.} For TSP baselines, we select the AM \citep{kool2018attention}, POMO \citep{kwon2020pomo}, Sym-NCO \citep{kim2022sym} and Pointerformer \citep{jin2023pointerformer} as RL-based constructive methods, DIMES \citep{qiu2022dimes} as RL-based heatmap-based methods, the Select and Optimize \citep[SO; ][]{cheng2023select} as RL-based improvement methods. We also applied two search algorithms, the Efficient Active Search \citep[EAS; ][]{hottung2021efficient} and Simulation Guided Beam Search \citep[SGBS; ][]{choo2022simulation}, to POMO for a more thorough comparison. For CVRP,\footnote{Due to the inaccessibility of source codes, we compare ours with TAM using the reported result in the paper \citep{hou2022generalize}. See \cref{app:tam}.} we employ POMO and Sym-NCO as baselines with EAS and SGBS. Note that we restrict the duration of these searches to no more than 5 and 10 minutes for problems with 500 and 1000 nodes, respectively. We use Concorde \citep{concorde} for TSP and PyVRP \citep{Wouda_Lan_Kool_PyVRP_2024} for CVRP as oracle heuristic solvers to obtain near-optimal solutions that serve as baselines with $0.00\%$ gap. For ACO algorithms, we use $K = 100$ ants with $T = 10$ with local search algorithms similar to \citet{ye2023deepaco}.

\textbf{Results.} GFACS consistently shows superior or highly competitive performance compared to evaluated baseline methods (\cref{tab:tsp_nls}). Ours outperforms fast zero-shot methods like AM and POMO—even when they employ search techniques such as EAS or SGBS—and achieves better results than similarly paced algorithms like DIMES and Pointformer. Although it performs slightly worse than the heuristic solver LKH-3 \citep{lkh2017}, GFACS offers faster inference times, especially in CVRP. As a general-purpose algorithm, it holds significant appeal in the competitive benchmarks of both TSP and CVRP. Notably, despite its small neural architecture and requiring less than three hours of training, GFACS outperforms larger models like POMO and Sym-NCO, which need over a week of training \citep{kim2022sym}.

\subsection{Ablation study} \label{sec:ablation}

\begin{table}[t]
    \centering
    \caption{Results of the ablation study.}
    \vspace{-0.5em}
    \resizebox{0.9\linewidth}{!}{



\begin{tabular}{lccc}
\toprule[1.0pt]
Method & TSP500 & CVRP500 & BPP120
\\
\midrule
GFACS & \textbf{16.80} & \textbf{64.19} & \textbf{51.84} \\
\quad -- \textit{Off-policy} & 17.09 & 64.42 & 52.32 \\
\quad -- \textit{Energy Reshaping} & 16.87 & 64.41 & 52.07 \\
\quad -- \textit{Shared Energy Norm.} & 17.52 & 65.51 & 90.73 \\
\quad\quad + \textit{VarGrad} & 16.86 & 64.28 & 51.95\\
\bottomrule[1.0pt]
\end{tabular}


} 
    \vspace{-0.5em}
    \label{tab:ablation}
\end{table}

This section reports the ablation study results, validating our proposed training techniques for stabilizing and enhancing GFlowNets in large-scale combinatorial optimization tasks. As detailed in \cref{tab:ablation}, our off-policy training method, incorporating local search, energy reshaping to value potential solutions, and shared $\epsilon$-norm to stabilize conditional GFlowNets training, significantly improves performance across three tasks: TSP, CVRP, and BPP. For comprehensive results, including standard deviations, refer to \cref{app:ablation}.

\subsection{Comparison with other GFlowNet-based training methods}
\label{sec:exp-fl-db}

While this paper proposes a novel training method for GFlowNets designed for large-scale combinatorial optimization problems that are not locally decomposable due to their cyclic nature, such as in TSP and CVRP, we also compare existing GFlowNets training methods within this domain.

\begin{figure}[t]
    \centering
    \vspace{-0.25em}
    \includegraphics[width=0.82\linewidth]{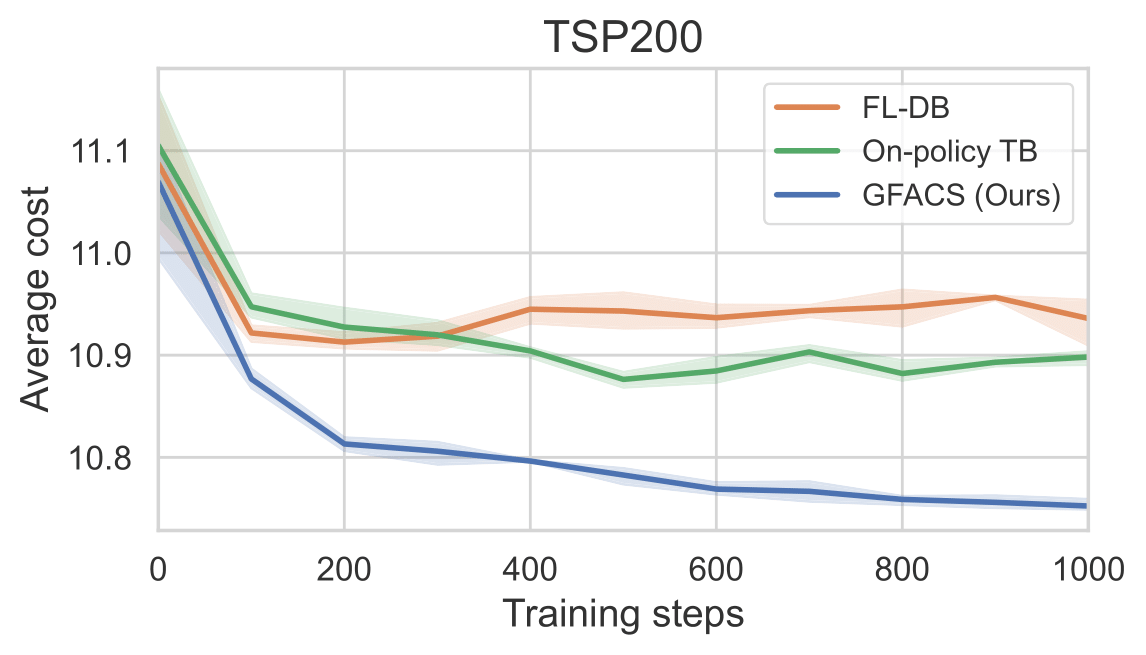}
    \vspace{-0.5em}
    \caption{Validation cost on TSP with 200 nodes during training, compared to forward-looking detailed balance (FL-DB).}
    \vspace{-0.75em}
    \label{fig:vsFLDB}
\end{figure}

\textbf{Forward-looking detailed balance} \citep[\textbf{FL-DB};][]{pan2023better, zhang2023let}. FL-DB leverages local credits when the application benefits from useful local rewards for partial solutions. While graph combinatorial optimization tasks are generally locally decomposable \citep{ahn2020learning}, yielding useful local rewards, tasks like TSP do not offer substantial local rewards due to their cyclic nature, which imposes a global constraint, thus complicating local credit assignment. As shown in \cref{fig:vsFLDB}, various GFlowNet methods were used to train the ACO's heuristic matrix $\eta(\mathcal{G})$. While FL-DB did not surpass on-policy TB for TSP, our off-policy method demonstrated significant improvement. This trend holds across different deep learning models and inference tasks beyond ACO; see \cref{append:fl-db}.

\textbf{Gradient of log-partition variance (VarGrad).} Originally proposed by \citet{richter2020vargrad} for variational inference, the VarGrad loss was applied to conditional GFlowNets for scheduling problems by \citet{zhang2022robust}. This approach is vital for combinatorial optimization tasks requiring conditional inference on the problem instance $\mathcal{G}$. It computes the log-partition function via the consistency property over trajectories conditioned on identical instances, thus bypassing explicit training of $\log Z(\mathcal{G})$. Explicit training is challenging due to varying energy landscapes across different instances. Our method, utilizing explicit training of the log-partition $\log Z(\mathcal{G})$ as in Eq.(\ref{eq:tb_ours}), outperforms VarGrad in training conditional GFlowNets (\cref{tab:ablation}). This improvement stems from our \textit{Shared energy normalization}, which stabilizes training by normalizing energy values to sample means, resulting in an energy distribution with a zero mean across instances. Note that accurately estimating the explicit log-partition value is known to stabilize training of GFlowNets \citep{malkin2022gflownets}.

\subsection{Integrating GFACS with active search}
\label{sec:exp-as}

\textbf{Background.} In this section, we explore the integration of active search (AS) as a posterior update on a parameterized prior distribution \( p_{\theta}(\cdot|\mathcal{G}) \). In AS, the prior parameter \(\theta\), pretrained on a large set of random problem instances \(\mathcal{G}\), is adaptively updated for specific test-time instances \(\mathcal{G}_{\text{test}}\) using the same objective function as in the prior training. We trained this prior \( p_{\theta} \) using two methods: on-policy RL method \citep{kwon2020pomo} that employ REINFORCE with shared baselines, and our off-policy GFlowNet training method as explained in~\cref{sec:training}.

\textbf{Setting.} For the active search procedure, we sample 20 solutions per test-time instance and perform 5 gradient updates at each iteration, updating \(\theta\) over a total of 50 iterations.

\begin{table}[t]
    \centering
    \caption{Evaluation results combine with active search (AS) as a posterior search method.}
    \vspace{-0.5em}
    \resizebox{1.0\linewidth}{!}{\begin{tabular}{lcccc}
\toprule[1.0pt]
\multicolumn{1}{c}{} & \multicolumn{2}{c}{TSP200} & \multicolumn{2}{c}{CVRP200}\\
\cmidrule[0.5pt](lr{0.2em}){2-3}
\cmidrule[0.5pt](lr{0.2em}){4-5}
Method & Obj. & Gap (\%) & Obj. & Gap (\%) \\
\midrule
Concorde / PyVRP & 10.72 & 0.00 & 27.93 & 0.00 \\
\midrule
RL prior + AS & 10.79  \footnotesize{$\pm$ 0.00} & 0.65 & 28.43 \footnotesize{$\pm$ 0.02} & 1.79 \\
GFlowNet prior + AS & \textbf{10.75}  \footnotesize{$\pm$ 0.00} & \textbf{0.29} & \textbf{28.34} \footnotesize{$\pm$ 0.01} & \textbf{1.47} \\
\bottomrule
\end{tabular}}
    \label{tab:active_search}
    \vspace{-0.75em}
\end{table}

\textbf{Results.} As shown in \Cref{tab:active_search}, the off-policy GFlowNet prior combined with AS posterior inference outperforms the RL counterpart.  This result provides further evidence supporting our hypothesis that the multi-modal prior from GFlowNet training synergizes effectively with the subsequent iterative posterior update algorithm. Additionally, the results demonstrate the flexibility of our method beyond the ACO, suggesting a promising direction for future research into the benefits of using a multi-modal informative prior.

\section{Conclusion}

In this paper, we introduced a novel probabilistic meta-heuristic that hierarchically combines GFlowNets-based amortized multi-modal priors with posterior search using ant colony optimization (ACO) for combinatorial optimization problems. Our strategy achieved remarkable success across seven benchmark combinatorial optimization tasks. Furthermore, by applying our meta-heuristic to other posterior search methods like active search (AS), we demonstrated the expandability and modularity of our approach. Limitation of our work is that, while our method empirically improves solution quality, we do not provide formal theoretical guarantees in how our method gives better optimality than the reinforcement learning counterpart.

\section*{Acknowledgment}

The authors thank Haoran Ye and Federico Berto for valuable discussions for this project. This research is based on funding from Samsung, Intel, CIFAR and the CIFAR AI Chair program. The research was enabled in part by computational resources provided by the Digital Research Alliance of
Canada (\url{https://alliancecan.ca}), Mila (\url{https://mila.quebec}), and NVIDIA. This work was also partially supported by the National Research Foundation of Korea (NRF) grant funded by the
Korea government (MSIT) (No. RS-2024-00410082).

\bibliography{reference}

\appendix
\onecolumn

\section{Implementation} \label{app:implement}

\subsection{Learning architecture and optimizer}

We utilize the anisotropic graph neural network with edge gating mechanisms, which is proposed by \citet{joshi2021learning} and adopted by \citet{ye2023deepaco}, as the foundational GNN architecture for the construction of a heuristic matrix. This network offers node and edge embeddings that are more beneficial for edge prediction than other popular GNN variants, such as Graph Attention Networks \citep{velivckovic2018graph} or Graph Convolutional Networks \citep{kipf2016semi}.

We represent the node feature of node $v_i$ at the $\ell$-th layer as $h_i^\ell$, and the feature of the edge between nodes $i$ and $j$ at the same layer as $e_{ij}^\ell$. These features undergo propagation to the next layer $\ell+1$ through an anisotropic message passing scheme, detailed as follows:

\begin{align}
& h_i^{\ell+1} =  h_i^\ell + \text{ACT}(\text{BN}(U^{\ell}h_i^\ell + \mathcal{A}_{j\in \mathcal{N}_i}(\sigma(e^\ell_{ij}) \odot V^\ell h_j^\ell))) \\
& m^\ell = P^\ell e_{ij}^\ell + Q^\ell h_i^\ell + R^\ell h_j^\ell \\
& e_{ij}^{\ell+1} = e_{ij}^\ell + \text{ACT}(\text{BN}(m^\ell))
\end{align}

where $U^\ell, V^\ell, P^\ell, Q^\ell, R^\ell \in R^{d \times d}$ are the learnable parameters of layer $\ell$, $\text{ACT}$ denotes the activation function, BN \citep{ioffe2015batch} denotes batch normalization, $\mathcal{A}$ denote the aggregation function, $\sigma$ is the sigmoid function, $\odot$ is Hadamard product, $\mathcal{N}_i$ denotes the neighborhoods of node $v_i$. In this paper we use $\text{ACT}$ is SiLU \citep{elfwing2018sigmoid} and $\mathcal{A}$ is mean pooling. We transform the extracted edge features into real-valued heuristic metrics by employing a 3-layer Multilayer Perceptron (MLP) that includes skipping connections to the embedded node feature $h$. The SiLU is the activation function for all layers except the output layer, in which the sigmoid function is applied to generate normalized outputs. The training process is optimized using the AdamW optimizer \citep{loshchilov2017decoupled} in conjunction with a Cosine Annealing Scheduler \citep{loshchilov2016sgdr}

\subsection{Hyperparameters}\label{sec:hyperparams}

\begin{table}[ht]
    \centering
    \vspace{-5pt}
    \caption{Training hyperparameter settings for each task. These hyperparameter settings are used both for GFACS and DeepACO training. The values with * indicate that the values are used for fine-tuning a pre-trained model.}
    \resizebox{1.0\linewidth}{!}{\begin{tabular}{l|ccc|ccc|c|c|c|c}
\toprule
 & TSP200 & TSP500 & TSP1000 & CVRP200 & CVRP500 & CVRP1000 & OP100 & PCTSP100 & SMTWTP100 & BPP120 \\
\midrule
$N_{\text{epoch}}$ & 50 & 50 & 20* & 50 & 50 & 20* & 20 & 20 & 20 & 20 \\
$N_{\text{inst}}$ & 400 & 400 & 200* & 200 & 200 & 100* & 100 & 100 & 100 & 100\\
$M$ & 20 & 20 & 10 & 10 & 10 & 5 & 5 & 5 & 5 & 5 \\
$K$ & 30 & 30 & 20 & 20 & 20 & 15 & 30 & 30 & 30 & 30 \\
\bottomrule
\end{tabular}}
    \label{tab:hyperparams_train}
    \vspace{0pt}
\end{table}

For a task of COP (e.g., TSP200), we train GFACS for $N_{\text{epoch}}$ epochs. In each epoch, we use $N_{\text{inst}}$ number of instances using mini-batch with size $M$; thus $N_{\text{inst}} / M$ gradient steps are taken per epoch. For each problem instance, we sample $K$ solutions to train with, as discussed in \cref{sec:stepA}. Our main baseline algorithm, DeepACO, is also trained using the same hyperparameters to ensure a fair comparison. We summarize used value for $N_{\text{epoch}}$, $N_{\text{inst}}$, $M$ and $K$ in \cref{tab:hyperparams_train}. Note that for TSP1000 and CVRP1000, we use model checkpoints pre-trained in TSP500 and CVRP500, respectively, to reduce the processing time for training.

There are other hyperparameters to be considered specifically for GFACS. $\beta$ is one of the most important hyperparameters for training GFACS, which is analyzed in \cref{sec:beta_tradeoff}. We use annealing for $\beta$ using the following log-shaped scheduling:

\begin{equation*}
    \beta = \beta_{\text{min}} + (\beta_{\text{max}} - \beta_{\text{min}})\min\left(\frac{\log(i_{\text{epoch}})}{\log(N_{\text{epoch}} - N_{\text{flat}})}, 1\right)
\end{equation*}

where $i_{\text{epoch}}$ is the index of epoch and $N_{\text{flat}}$ is the number of last epochs to have $\beta = \beta_{\text{max}}$.

\begin{table}[ht]
    \centering
    \caption{Settings of $\beta$ for each task.}
    \resizebox{0.6\linewidth}{!}{\begin{tabular}{lccccccc}
\toprule
 & TSP & CVRP & CVRPTW & OP & PCTSP  & SMTWTP & BPP \\
\midrule
$\beta_{\text{min}}$ & 200 & 500 & 500 & 5 & 20 & 10 & 1000\\
$\beta_{\text{max}}$ & 1000 & 2000 & 2000 & 10 & 50 & 20 & 2000\\
\bottomrule
\end{tabular}}
    \label{tab:hyperparams_beta}
    \vspace{0pt}
\end{table}

We report the $\beta$ value used for each task in \cref{tab:hyperparams_beta}. Note that $\beta$ remains consistent across different scales of the same problem class.

In \Cref{eq:energy_reshaping}, we also need a hyperparameter $\alpha$ that controls how much we augment the energy of a solution with the energy of the refined solution. If $\alpha = 0$, we do not augment at all, and if $\alpha = 1$, the energy of a solution is defined to be equal to the energy of the refined solution. We empirically found that linearly increasing $\alpha$ from $0.5$ to $1.0$ along the epochs works well, and we use that setting for all experiments.

\subsection{Computing resource}\label{append:compute-resources}
Throughout the experiments, we utilize a single CPU, specifically an AMD EPYC 7542 32-Core Processor, and a single GPU, the NVIDIA RTX A6000, for training neural networks. After thet, the inference was conducted on a separate, standalone PC with a single AMD Ryzen 9 5900X 12-Core Processor and NVIDIA RTX 4070 Ti to assess the wall time of each algorithm fairly.

\section{Problem settings} \label{app:problems}

In this section, we provide an overview of the combinatorial optimization problems we address. These encompass vehicle routing problems, bin packing problems, and scheduling problems. For a comprehensive understanding of each problem's specific settings, we direct readers to the work by \citet{ye2023deepaco}. Furthermore, for access to the exact source code pertaining to each problem, please refer to the repository available at \footnote{\url{https://github.com/henry-yeh/DeepACO}} from \citet{ye2023deepaco}.

\paragraph{Traveling salesman problem} 
In the traveling salesman problem (TSP), we aim to find the shortest route that visits a set of nodes once and returns to the origin. We consider Euclidean TSP, where the distance between two nodes is determined by Euclidean distance. A TSP instance can be determined as a set of nodes with 2D coordinates. We generate a random TSP instance by sampling the coordinate from a unit square, $[0, 1]^2$.

\paragraph{Capacitated vehicle routing problem}
Similar to TSP, capacitated vehicle routing problem (CVRP) also seeks the shortest path that visits all the nodes once. In CVRP, however, a vehicle has a limited capacity, and each node has a demand to be satisfied. The sum of the demand of each subroute should not exceed the capacity, which limits the number of nodes that can be visited by one vehicle. In our setting, CVRP is also defined in Euclidean space.

A CVRP instance can be represented by a set of customer nodes, a single depot node, and a capacity $C$. Each customer node has a position (2D coordinates) and a demand, while the depot only has a position. A random CVRP instance is generated by sampling the coordinates of both customer and depot from a unit square $[0, 1]^2$ and sampling demands for each customer from a predefined uniform distribution, $U[a, b]$. We use $a=1$ and $b=9$, and a fixed $C=50$ for all problem scales, 200, 500 and 1000.

\paragraph{Capacitated vehicle routing problem with time window} A uniform-capacity fleet of delivery vehicles is tasked with servicing customers who have specific demand and operating hours (earliest-latest arrival) for a single commodity. The vehicles should deliver to customer only the time available for service. The goals are to minimize the total travel distance.

A CVRPTW instance consists of a set of customer nodes, a single depot node, the demands of each customer node, the vehicle capacity (as in CVRP), and additionally, the earliest and latest arrival times for each node, as well as the service time at each node. We generate the CVRPTW instance following the method provided by \citet{berto2023rl4co}, which can be accessed here.\footnote{\url{https://github.com/ai4co/rl4co/blob/main/rl4co/envs/routing/cvrptw/generator.py}}

\paragraph{Orienteering problem}
In the orienteering problem (OP), the objective is to maximize the total prize of visited nodes within a given tour length limitation. This problem is similar to CVRP but focuses on maximizing gains rather than minimizing costs.

An OP instance comprises a set of nodes to visit, each with its position and prize, a depot, and a maximum length constraint. We sample a random OP instance by first sampling the position of nodes, including the starting node, from a unit square $[0, 1]^2$. And we set the prize for each node, following \cite{fischetti1998solving,ye2023deepaco}, to be $p_i = (1 + \lfloor 99 \cdot \frac{d_{0,i}}{\max_{j=1}^{n}{d_{0,j}}} \rfloor) / 100$, where $d_{0,i}$ is the Euclidean distance from the depot to node $i$. We set the maximum length to 4 in the OP100 task, following \citet{kool2018attention}.

\paragraph{Prize-collecting traveling salesman problem}
The prize-collecting traveling salesman problem (PCTSP) extends the TSP by associating a prize for visiting a node and a penalty for not visiting it. The goal is to find a tour that minimizes the total travel cost plus penalties for the unvisited nodes and also satisfies a predefined prize threshold. In PCTSP, balancing the need to collect prizes against the cost of visiting more nodes matters.

A PCTSP instance comprises a set of nodes to visit, a depot, and a minimum total prize threshold. The coordinates of nodes and depot are sampled in the unit square. The prize for each node is sampled from $U[0,1]$, while the penalty for each node is sampled from $U[0, C_N]$, where $C_N$ is set to $0.12$ for $N=100$. Lastly, the minimum total prize threshold is $N/4$ for problems with ${N}$ nodes, i.e., $25$ for PCTSP100.

\paragraph{Single machine total weighted tardiness problem}
This problem involves scheduling jobs on a single machine where each job has a due date, processing time, and weight. The objective is to minimize the total weighted tardiness, which is the sum of the weights of the jobs multiplied by their tardiness (the amount of time a job is completed after its due date).

To generate an SMTWTP instance, we sample a due date from $U[0, N]$, the weight from $U[0, 1]$, and the processing time from $U[0, 2]$, where $N$ is the number of jobs to process.

\paragraph{Bin packing problem}
The bin packing problem (BPP) aims to minimize the number of bins when objects with different volumes need to be packed into a finite number of bins. The cost functions are often modified to maximize the utility of used bins \citep{falkenauer1992bpp, ye2023deepaco}.
However, the direct use of the objective function employed in DeepACO can lead to an excessively large $\beta$ since the objective values tend to be saturated near 1.0; note that the objective values can not exceed 1.0 in that setting. Therefore, we slightly modify them to minimize a weighted sum of the inverse of the utilization of the used bins throughout all our BPP experiments.

To generate a BPP instance, we sample the $N$ items with a random size from $U[20, 100]$. And we use 150 for the bin capacity.

\section{Implementation of destroy-and-repair local search}\label{append:destroy-and-repair}

As demonstrated in \cref{sec:training}, our algorithm utilizes the local search for off-policy training. The refined solutions from local search serve as exploitation batches, and they are also used for energy-reshaping (\Cref{eq:energy_reshaping}).
 For some problems like traveling salesman problem (TSP), capacitated vehicle routing problem (CVRP), and CVRP with time window (CVRPTW), we used the well-established local search operators such as 2-opt, Swap* \citep{vidal2022hybrid} and granular neighborhood search from PyVRP \citep{Wouda_Lan_Kool_PyVRP_2024}.

While domain-specific local search techniques are well developed for many combinatorial optimization problems, there are still tasks that lack effective local search solvers like Swap*. A potential candidate for a general-purpose local search solver in combinatorial optimization is the destroy-and-repair local search operator. This method is inspired by the large neighborhood search approach described by \citet{pisinger2019large}. Our implementation utilizes an evolutionary-style algorithm that repeatedly performs destruction, reconstruction, and top-K selection over a specified number of rounds.

\textbf{Symmetricity-aware destruction.} The most naive way to destroy a solution $x$ is simply dropping $N_{destroy}$ of the last nodes from the sampled trajectory $\tau_{\to x}$. However, it only performs the destruction with one trajectory, while there can be multiple trajectories that correspond to the same solution $x$. Considering this symmetric nature, our destruction phase consists of 1) selecting $\tau'_{\to x}$ randomly from the set of symmetric trajectories and then 2) destroying $\tau'_{x}$ by dropping the last $N_{destroy}$ nodes, resulting in a partial solution. Note that the number of symmetric trajectories for a given solution differs from problem types, as discussed in \cref{app:symmetric}.

\textbf{Repair with temperature annealing.} Given a partial solution, we reconstruct a new solution $x'$ using the policy similar to the decoding steps explained in \cref{sec:policy}. Considering the purpose of the local search (refining a given solution), we further promote the exploitation by introducing an inverse temperature for the policy. Formally, we modify \cref{eq:policy} with the inverse temperature $\delta_r$ as follows:
\begin{equation}
    P_F(s_{t+1}|s_t; \eta(\mathcal{G};\theta), \rho, \delta_r)
    \propto \begin{cases} \left( \rho_{a_{t-1},a_{t}}\eta_{a_{t-1},a_{t}}(\mathcal{G}; \theta)\right)^{\delta_r} & \text{if $a_t \notin s_{t}$ }\\ 0 & \text{otherwise} \end{cases}
\end{equation}
We increase $\delta_r$ linearly from 1.0 to a specified maximum value, $\delta_{\text{max}}$, as the local search round progresses.

\textbf{Top-K selection}. For a given solution $x$, we perform the aforementioned destruction and repair multiple times (in batch), obtaining a set of refined solutions. Then we evaluate them and leave only top-K solutions among them in terms of solution quality. The remaining solutions are used as an initial solution for the next round of the search.

\section{Additional experiments}

\subsection{Comparison between trajectory balance and forward-looking detailed balance}
\label{append:fl-db}

\begin{table}[ht]
    \centering
    \vspace{-5pt}
    \caption{Comparison between forward-looking detailed balance (FL-DB) and trajectory balance (TB) loss in TSP50, using Attention Model (AM) and heatmap-based method from \citet{joshi2021learning} as backbone policies.}
    \resizebox{0.5\linewidth}{!}{\begin{tabular}{lcc}
\toprule[1.0pt]
\multicolumn{1}{c}{} & \multicolumn{2}{c}{TSP50}\\
\cmidrule[0.5pt](lr{0.2em}){2-3}
Method & Obj. & Gap (\%)\\
\midrule
Concorde & 5.69 & 0.00 \\
\midrule
AM + FL-DB ($\text{s.w.}=1280$) & 6.52 & 14.64 \\
AM + TB ($\text{s.w.}=1280$) & 6.00 & 5.50 \\
\midrule
Joshi et al. + FL-DB ($\text{s.w.}=1280$) & 6.54 & 14.96 \\
Joshi et al. + TB ($\text{s.w.}=1280$) & 6.09 & 7.05 \\
\bottomrule
\end{tabular}
}
    \label{tab:vsLTFT}
    \vspace{0pt}
\end{table}

We trained Attention Model~\citep[AM;][]{kool2018attention} and heatmap-based policy from \citet{joshi2021learning} in an unsupervised way using FL-DB and TB loss. \Cref{tab:vsLTFT} shows the test results for 128 random TSP50 instances. The result is aligned with the previous observation from GFACS, showing the limitation of FL-DB loss for solving complex CO tasks.

\subsection{Analysis of $\beta$ and diversity-optimality trade-off} \label{sec:beta_tradeoff}

In this subsection, we explored the diversity-optimality trade-off by varying the inverse energy temperature $\beta$ from low to high values. For analysis, we generate $K$ solution samples from policies trained with GFACS with different values of $\beta$ and DeepACO (with $\rho=1$ so that the pheromone does not affect the result).

We use average pairwise Jaccard distance as a diversity metric. Formally, the diversity of a set of $K$ solutions is calculated as $\frac{1}{K(K-1)}\sum_{i}\sum_{j} d_{J}(x_i, x_j)$, where $d_J$ is a Jaccard distance between two solutions, defined as follows:
\begin{equation}
    d_J(x_i, x_j) = 1 - J(x_i, x_j) = 1 - \cfrac{\vert E(x_i) \cap E(x_j) \vert}{\vert 
E(x_i) \cup E(x_j) \vert}
\end{equation} 
$E(x)$ is the set of edges composing the solution $x$. In TSP or CVRP, the (undirected) edge is a set of adjoining nodes in the solution path. For some CO problems, such as SMTWTP, the edge should be directed due to its asymmetric nature.

\begin{wrapfigure}{r}{0.5\textwidth}
    \centering
    \includegraphics[width=0.95\linewidth]{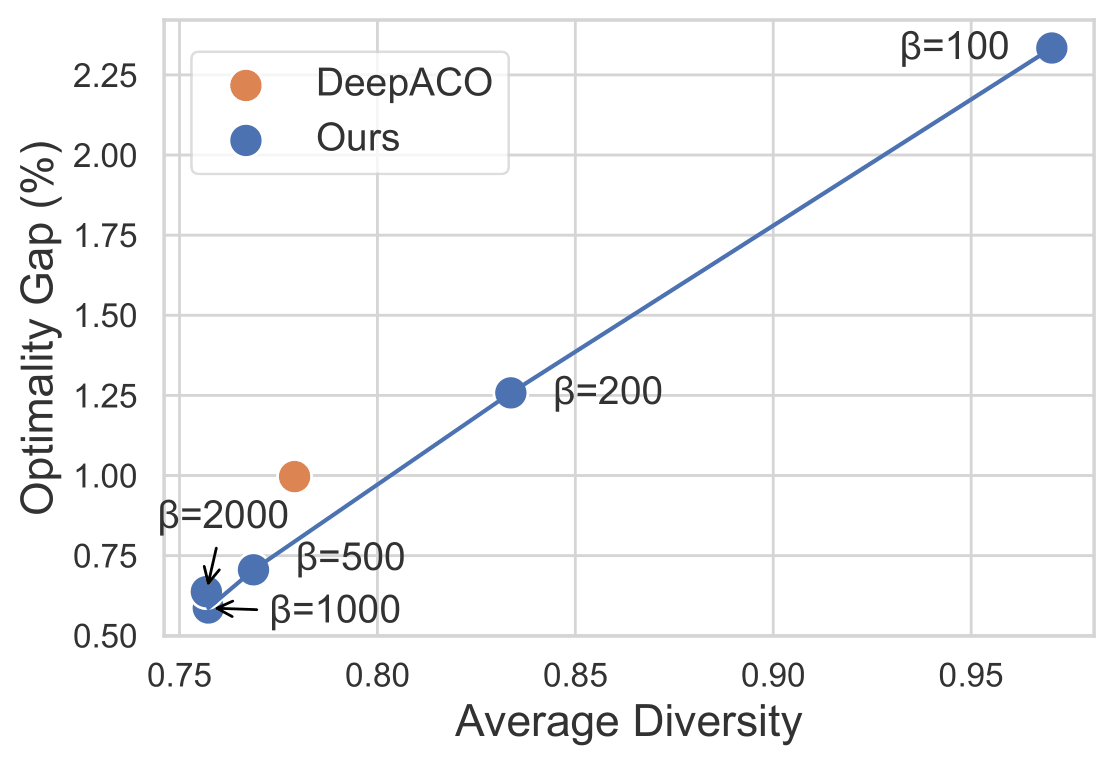}
    \vspace{-5pt}
    \caption{The trade-off between diversity and optimality as the inverse energy temperature $\beta$ changes.}
    \label{fig:diversity_tradeoff}
\end{wrapfigure}

\cref{fig:diversity_tradeoff} show that lower $\beta$ values lead to more diverse solutions with higher costs, while higher $\beta$ values result in less diverse solutions with lower costs. GFACS confirms the anticipated trade-off between diversity and optimality, establishing a Pareto frontier over DeepACO by offering more diverse and high-quality solutions. This advantage is particularly beneficial when an effective local search mechanism is available, increasing the likelihood of achieving superior local optima. However, excessively high $\beta$ (e.g., $2000$ in our result) degrades both diversity and solution quality, supposedly because too much high $\beta$ hinders exploration during training. This empirical evidence supports our design choice of $\beta$ annealing \cref{sec:hyperparams}, which encourages exploration in the early stage of training.

\subsection{Evaluation with other ACO variants}
\label{append:acovariants}

\begin{table}[ht]
    \centering
    \caption{Evaluation results with different pheromone update rules, Elitist Ant System (Elitist) and MAX-MIN Ant System (MAX-MIN). We run $T=10$ ACO rounds with 100 ants in all settings.}
    \vspace{-5pt}
    \resizebox{0.5\linewidth}{!}{\begin{tabular}{lcccc}
\toprule[1.0pt]
\multicolumn{1}{c}{} & \multicolumn{2}{c}{TSP200} & \multicolumn{2}{c}{CVRP200}\\
\cmidrule[0.5pt](lr{0.2em}){2-3}
\cmidrule[0.5pt](lr{0.2em}){4-5}
Method & Obj. & Gap (\%) & Obj. & Gap (\%) \\
\midrule
Concorde / PyVRP & 10.72 & 0.00 & 27.93 & 0.00 \\
\midrule
Ant System & 10.91 & 1.77 & 28.45 & 1.86 \\
DeepACO + Ant System & 10.77 & 0.45 & 28.44 & 1.80 \\
GFACS + Ant System & \textbf{10.75} & \textbf{0.26} & \textbf{28.32} & \textbf{1.40} \\
\midrule
Elitist & 10.97 & 2.31 & 28.46 & 1.90 \\
DeepACO + Elitist & 10.77 & 0.49 & 28.45 & 1.84 \\
GFACS + Elitist & \textbf{10.75} & \textbf{0.28} & \textbf{28.34} & \textbf{1.47} \\
\midrule
MAX-MIN & 10.97 & 2.34 & 28.46 & 1.88 \\
DeepACO + MAX-MIN & 10.78 & 0.57 & 28.44 & 1.82 \\
GFACS + MAX-MIN & \textbf{10.75} & \textbf{0.32} & \textbf{28.33} & \textbf{1.43} \\
\bottomrule
\end{tabular}}
    \label{tab:acovariants}
\end{table}

Our method is focused on enhancing the design of the heuristic matrix within the ACO framework. Thus, the advancement of the ACO algorithm itself (usually in pheromone update and action selection rule) is orthogonal to our work.
To validate this claim, we evaluate our method with different ACO variants other than Ant System, namely Elitist Ant System \citep{ant-sys}, MAX-MIN Ant System \citep{max-min-ant}.
The results in \Cref{tab:acovariants} demonstrate that GFACS consistently outperforms both ACO with human-designed heuristic and DeepACO under all pheromone update rules tested. This underscores the superiority of GFACS in generating a high-quality heuristic matrix.

\subsection{Comparison against TAM} \label{app:tam}

\begin{table}[ht]
    \centering
    \caption{Results for CVRP instances used in TAM. The results for AM and TAM is drawn from \citet{hou2022generalize}, and the result for DeepACO is drawn from \citet{ye2023deepaco}.}
    \vspace{-5pt}
    \resizebox{0.5\linewidth}{!}{\begin{tabular}{lcccc}
\toprule[1.0pt]
\multicolumn{1}{c}{} & \multicolumn{2}{c}{CVRP100} & \multicolumn{2}{c}{CVRP400}\\
\cmidrule[0.5pt](lr{0.2em}){2-3}
\cmidrule[0.5pt](lr{0.2em}){4-5}
Method & Obj. & Time (s) & Obj. & Time (s)\\
\midrule
AM* & 16.42 & 0.06 & 29.33 & 0.20 \\
TAM* & 16.08 & 0.09 & 25.93 & 1.35 \\
\midrule
DeepACO ($T=4$) & 16.08 & 2.97 & 25.31 & 3.65 \\
DeepACO ($T=10$) & 15.77 & 3.87 & 25.27 & 5.89 \\
\midrule
GFACS ($T=4$) & 15.77 & 2.91 & 24.95 & 3.23 \\
GFACS ($T=10$) & 15.72 & 3.88 & 24.87 & 5.21 \\
\bottomrule

\end{tabular}}
    \label{tab:tam}
\end{table}

We compare GFACS and DeepACO against CVRP instances used in Two-stage Divide Method~\citep[TAM;][]{hou2022generalize}. These instances have a different capacity constraint from those we used for the main experiment (\cref{tab:tsp_nls}). Please refer to \cite{hou2022generalize} for more details regarding the instance generation. Moreover, to match the processing time to be comparative to DeepACO, we set $K$ (the number of ants for ACO) to 16.

\subsection{Full results of the ablation study}
\label{app:ablation}

\begin{table}[ht]
    \centering
    \caption{Results of the ablation study. The mean and standard deviation are reported based on three independent runs. The reported gap is calculated against Concorde \citep{concorde} for TSP and PyVRP \citep{Wouda_Lan_Kool_PyVRP_2024} for CVRP.}
    \vspace{-5pt}
    \resizebox{0.95\linewidth}{!}{
\begin{tabular}{lccccc}
\toprule[1.0pt]
\multicolumn{1}{c}{} & \multicolumn{2}{c}{TSP500} & \multicolumn{2}{c}{CVRP500} & \multicolumn{1}{c}{BPP120}
\\
\cmidrule[0.5pt](lr{0.2em}){2-3}
\cmidrule[0.5pt](lr{0.2em}){4-5}
\cmidrule[0.5pt](lr{0.2em}){6-6}
Method & Obj. & Gap (\%) & Obj. & Gap (\%) & Obj. \\
\midrule
GFACS & \textbf{16.8042} $\pm$ 0.0043 & \textbf{1.56} & \textbf{64.1858} $\pm$ 0.0071 & \textbf{1.95} & \textbf{51.8428} $\pm$ 0.0706 \\
\quad -- \textit{Off-policy Training} & 17.0861 $\pm$ 0.0028 & 3.26 & 64.4248 $\pm$ 0.0155 & 2.33 & 52.3158 $\pm$ 0.1171 \\
\quad -- \textit{Energy Reshaping} & 16.8718 $\pm$ 0.0049 & 1.97 & 64.4121 $\pm$ 0.0069 & 2.31 & 52.0654 $\pm$ 0.0998 \\
\quad -- \textit{Shared $\varepsilon$-norm} & 17.5209 $\pm$ 0.0062 & 5.87 & 65.5069 $\pm$ 0.0171 & 4.05 & 90.7332 $\pm$ 9.1649 \\
\quad\quad + \textit{VarGrad} \cite{zhang2022robust} & 16.8570 $\pm$ 0.0086 & 1.88 & 64.2771 $\pm$ 0.0293 & 2.09 & 51.9504 $\pm$ 0.1222 \\
\bottomrule[1.0pt]
\end{tabular}

}
    \label{tab:ablation_full}
\end{table}

In this section, we present the comprehensive results of our ablation study, including standard deviations from multiple independent runs (three runs) to ensure statistical significance. \cref{tab:ablation_full} demonstrates that each component of our technical proposals contributes to improvement, supported by empirical and statistically rigorous evidence.

\subsection{Results without local search on TSP and CVRP}

\begin{table}[ht]
    \centering
    \caption{Results obtained with and without local search (LS) at test time on TSP and CVRP are reported. The values represent the objective value averaged over 128 test instances and further averaged across three independent runs.}
    \vspace{-5pt}
    \resizebox{0.55\linewidth}{!}{\begin{tabular}{lcccc}
\toprule[1.0pt]
Method & TSP200 & TSP500 & TSP500 & CVRP500 \\
\midrule
ACO w/o LS & 14.19 & 24.05 & 42.95 & 95.24 \\
DeepACO w/o LS & 11.59 & 18.83 & 32.12 & 73.20 \\
GFACS w/o LS & 12.63 & 21.72 & 34.04 & 81.03 \\
\midrule
ACO w/ LS & 10.91 & 17.38 & 28.45 & 64.62 \\
DeepACO w/ LS & 10.77 & 16.84 & 28.44 & 64.49 \\
GFACS w/ LS & 10.75 & 16.80 & 28.32 & 64.19 \\
\bottomrule

\end{tabular}}
    \label{tab:ls_ablation}
    \vspace{0pt}
\end{table}

We conduct an additional ablation study to assess the impact of local search at test time. Note that we use the same policies as in \cref{sec:exp-vs-other-aco} or \cref{sec:exp-vs-nco}, which were trained \textit{with} local search. The results are presented in \cref{tab:ls_ablation}. As expected, DeepACO (RL-based) performs better than ours when local search is not applied. This outcome can be attributed to two factors: (1) The RL-based method tends to generate non-diverse low-energy samples, whereas the GFlowNet-based method (GFACS) produces a more diverse set of candidates; and (2) Due to the energy reshaping (Eq.~(\ref{eq:energy_reshaping}) in \cref{sec:stepA}), the policy is optimized with respect to a modified energy map that assumes the application of local search, resulting in suboptimal performance when local search is omitted. This second factor also explains why GFACS outperforms other methods when local search is applied.

\subsection{Runtime analysis}

\begin{table}[ht]
    \centering
    \caption{Average running time (in seconds) to solve a single instance for~\cref{tab:lib}.}
    \vspace{-5pt}
    \resizebox{0.4\linewidth}{!}{\begin{tabular}{clccc}
\midrule
& \multicolumn{1}{c}{$N$} & ACO & DeepACO & GFACS \\
\midrule
\parbox[t]{2mm}{\multirow{3}{*}{\rotatebox[origin=c]{90}{\scriptsize{TSPLib}}}} & 100-299 & 1.3 & 1.3 & 1.3 \\
& 300-699 & 11.2 & 10.5 & 10.8 \\
& 700-1499 & 89.3 & 72.5 & 77.0 \\
\midrule
\parbox[t]{2mm}{\multirow{3}{*}{\rotatebox[origin=c]{90}{\scriptsize CVRPLib}}} & 100-299 & 15.5 & 15.5 & 15.5 \\
& 300-699 & 34.2 & 35.1 & 35.5 \\
& 700-1001 & 60.0 & 59.8 & 61.8 \\

\bottomrule
\end{tabular}}
    \label{tab:lib_time}
    \vspace{0pt}
\end{table}

For completeness, in \cref{tab:lib_time}, we report the runtime for each algorithm on the real-world instances (\cref{tab:lib} in \cref{sec:exp-vs-other-aco}). Although all algorithms have similar inference times, DeepACO and GFACS occasionally run faster than ACO. This is attributed to the increased complexity of real-world instances: both DeepACO and GFACS benefit from an NN-based amortized prior that enables them to start closer to a local optimum, resulting in earlier termination of the local search. This also explains why DeepACO is slightly faster than GFACS, as it produces a more locally optimized initial solution.

\section{Symmetric solutions of combinatorial optimization}\label{app:symmetric}

This section delves into the symmetric characteristics of combinatorial solutions, wherein multiple trajectories, denoted as \( \tau \), can map to an identical combinatorial solution. We begin with a formal analysis focusing on two quintessential tasks: the Traveling Salesman Problem (TSP) and the Capacitated Vehicle Routing Problem (CVRP). This analysis lays the foundation for understanding the connections to other related combinatorial optimization problems.

\begin{proposition}[Symmetry Solution in TSP (Hamiltonian Cycle)]
Let \( \mathcal{G} = (\mathcal{V}, \mathcal{D}) \) be a finite, undirected graph where \( \mathcal{V} \) is the set of vertices and \( \mathcal{D} \) is the set of edges. Assume \( \mathcal{G} \) contains a Hamiltonian cycle. Denote \( N = |\mathcal{V}| \) as the number of vertices. If \( N > 1 \), there exist exactly \( 2N \) symmetric trajectories (including the cycle and its reverse) starting from each vertex in the cycle. For \( N = 1 \), there is exactly one identical trajectory.
\end{proposition}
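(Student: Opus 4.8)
The plan is to reduce the statement to a group-counting argument by identifying each trajectory with a permutation and then recognizing the symmetries of a cycle as a dihedral group action. First I would recall from the MDP in \cref{sec:mdp} that a complete trajectory $\tau$ is in bijection with a permutation $\pi = (\pi_1, \ldots, \pi_N)$ of the $N$ vertices, where the deterministic terminal action closes the tour by joining $\pi_N$ back to $\pi_1$. Two trajectories are \emph{symmetric}, i.e., they map to the same solution $x$, precisely when their permutations induce the same undirected edge set $\{\{\pi_t,\pi_{t+1}\} : t \bmod N\}$. Hence it suffices to count the permutations whose induced edge set equals that of a fixed Hamiltonian cycle $C$; call this set $T(x)$.

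Next I would exhibit $2N$ such permutations and argue they exhaust $T(x)$. Fix one traversal $\pi^{\star} = (c_1,\ldots,c_N)$ of $C$. The dihedral group $D_N = \langle r, s \mid r^N = s^2 = 1,\ srs = r^{-1}\rangle$ of order $2N$ acts on traversals, with the rotation $r$ cyclically shifting the chosen starting vertex and the reflection $s$ reversing the direction of travel; each generator visibly carries a valid traversal of $C$ to another, so all $2N$ images lie in $T(x)$. For transitivity (exhaustion) I would argue that any $\pi \in T(x)$ is determined by its (start, direction) data: once $\pi_1$ is chosen, $\pi_2$ must be one of the two $C$-neighbors of $\pi_1$, and thereafter every $\pi_{t+1}$ is \emph{forced} to be the unique as-yet-unvisited $C$-neighbor of $\pi_t$, since each vertex of a cycle has exactly two neighbors. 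Thus $D_N$ acts transitively on $T(x)$.

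The main obstacle, and where the hypothesis $N>1$ genuinely enters, is establishing that the action is also \emph{free}, so that the $2N$ candidates are pairwise distinct rather than collapsing; freeness together with transitivity gives a simply transitive action and hence $|T(x)| = 2N$. I would show that when $N>2$ each vertex has two distinct neighbors in $C$, so distinct starting vertices force distinct $\pi_1$ and the two directions force distinct $\pi_2$, ruling out any coincidence among the $2N$ sequences; the boundary case $N=2$, where the cycle degenerates, I would handle by direct inspection. I expect this freeness/distinctness step to demand the most care, as it is precisely the point at which the cycle structure (forced interior steps, distinct neighbors) must be used rather than assumed.

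Finally, the case $N=1$ is immediate: a single vertex admits only the one-element trajectory $(\pi_1)$, with no nontrivial rotation or reflection available, giving exactly one trajectory and completing the proof.
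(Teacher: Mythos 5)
Your proposal is correct and reaches the same count, but it is a genuinely more structured argument than the paper's. The paper's proof is a two-line direct count: a Hamiltonian cycle can be traversed in two directions from each of the $N$ vertices, hence $2N$ trajectories; it implicitly assumes both that these $2N$ sequences exhaust all trajectories mapping to $x$ and that they are pairwise distinct. Your dihedral-torsor framing makes both hidden steps explicit: transitivity (exhaustion) via the forced-interior-step argument --- after choosing $\pi_1$ and one of its two $C$-neighbors as $\pi_2$, every subsequent vertex is the unique unvisited neighbor --- and freeness (distinctness), which you correctly flag as the only delicate point. Carrying out your $N=2$ inspection is worthwhile: there the two ``directions'' from a given start coincide (the reflection $s$ stabilizes each traversal), the action is not free, and the orbit has size $2$, not $2N=4$. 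So the proposition's ``exactly $2N$ for $N>1$'' is strictly valid only for $N\ge 3$; this is harmless in context, since a Hamiltonian cycle in a simple undirected graph forces $N\ge 3$ (making $N=2$ vacuous under a strict reading) and $N=2$ TSP instances are degenerate, but it is a subtlety the paper's informal count silently passes over and your freeness analysis is precisely the mechanism that detects it. In short: the paper buys brevity; your approach buys rigor, a clean uniqueness argument, and an honest accounting of the boundary case.
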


\begin{proof}
A Hamiltonian cycle in \( \mathcal{G} \) is a cycle that visits every vertex exactly once. For \( N > 1 \), each cycle can be traversed in two distinct directions (clockwise and counterclockwise) from each vertex, yielding \( 2 \) trajectories per vertex. Since there are \( N \) vertices, this results in \( 2N \) symmetric trajectories. For \( N = 1 \), the cycle is trivial and only has one trajectory.
\end{proof}

\begin{proposition}[Symmetric Solutions in CVRP]
Consider a Capacitated Vehicle Routing Problem (CVRP) defined on a graph \( \mathcal{G} = (\mathcal{V}, \mathcal{D}) \) with a designated depot vertex. Let there be \( K \) distinct Hamiltonian cycles (or routes) originating and terminating at the depot, and denote \( S \) as the number of single vertex cycles (excluding the depot) among these \( K \) cycles. The total number of symmetric solutions, considering the directionality of each cycle, is given by \( K! \times 2^{K-S} \).
\end{proposition}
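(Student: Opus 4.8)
The plan is to decompose the set of symmetric trajectories into two independent combinatorial choices---the order in which the $K$ routes are concatenated, and the traversal direction of each individual route---and then apply the multiplication principle. Because the depot is a designated, \emph{fixed} starting vertex, there is (unlike the TSP case) no additional factor arising from a choice of starting vertex: every valid trajectory begins and ends at the depot, and the intermediate returns to the depot act as unambiguous separators between consecutive routes.

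First I would fix the structural description of a trajectory. A complete CVRP trajectory visits the depot, traverses the customers of some route, returns to the depot, traverses the next route, and so on until all $K$ routes are covered. Since a CVRP solution is an \emph{unordered} collection of routes, and the $K$ routes visit pairwise disjoint (hence distinct) customer sets, each of the $K!$ permutations $\sigma$ specifying the order of the routes yields a genuinely different trajectory. This contributes the factor $K!$.

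Next I would count traversal directions route by route. A route through $m \geq 2$ customers can be traversed in its given order or in reverse, and these two orientations produce distinct node sequences, contributing a factor of $2$. A single-customer route (depot, then one customer, then depot) is invariant under reversal and therefore contributes a factor of $1$. With $S$ single-customer routes and $K - S$ routes of length at least two, the directional choices multiply to $1^{S} \cdot 2^{K-S} = 2^{K-S}$. As the ordering choice and the per-route orientation choices range over independent index sets, the multiplication principle yields the claimed total $K! \times 2^{K-S}$.

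The main obstacle is verifying that this enumeration is exact, i.e., that the map sending a pair (permutation, orientation assignment) to its trajectory is a bijection onto the symmetric trajectories of the fixed solution. Injectivity relies on recoverability: given any such trajectory, the depot separators delimit the routes uniquely, so both the ordering $\sigma$ and the orientation of every nondegenerate route can be read back off, and hence no two distinct pairs collide. The subtlety lives entirely at the single-vertex routes, whose two nominal orientations coincide; isolating these is precisely what replaces a naive $2^{K}$ by $2^{K-S}$ and prevents overcounting. Surjectivity is immediate, since every symmetric trajectory is by definition obtained from some ordering and orientation of the solution's routes.
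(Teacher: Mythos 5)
Your proof is correct and follows essentially the same argument as the paper's: a factor of $K!$ for the ordering of the routes times $2^{K-S}$ for the orientations of the non-singleton routes, with single-vertex routes contributing no directional choice. Your added verification that the pairing (permutation, orientation assignment) $\mapsto$ trajectory is a bijection---using the depot returns as unambiguous separators---is a welcome rigorization of the multiplication step the paper states without justification, but it is a refinement of the same approach rather than a different one.
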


\begin{proof}
Each of the \( K \) cycles can be ordered in \( K! \) ways. Excluding the \( S \) single vertex cycles, each of the remaining \( K-S \) cycles can be traversed in two directions (forward and reverse). Therefore, for these \( K-S \) cycles, there are \( 2^{K-S} \) ways to choose a direction. The product \( K! \times 2^{K-S} \) gives the total number of symmetric solutions for the CVRP under these conditions.
\end{proof}

The propositions discussed previously can be extended to various variants of vehicle routing problems. Specifically, in the context of the orienteering problem (OP) and prize collecting traveling salesman problem (PCTSP), which represent special cases where \( K = 1 \), the number of symmetric trajectories is determined by the number of vertices, \( N \), in the solution. When \( N > 1 \), there are exactly 2 symmetric trajectories for each route. In contrast, for a single-vertex solution (\( N = 1 \)), the trajectory is identical, reflecting the singularity of the route.

\section{Related works}\label{app:extended}

\subsection{Deep constructive policy for neural combinatorial optimization}

Constructive policy methods systematically build solutions by incrementally adding elements, beginning with an empty set and culminating in a complete solution. This approach ensures compliance with constraints by progressively narrowing the scope of actions, making it versatile for application in domains like vehicle routing \citep{Nazari, kool2018attention}, graph combinatorial optimization \citep{khalil2017learning, ahn2020learning, zhang2023let}, and scheduling \citep{zhang2020learning}.

In deep learning, various techniques adopt this constructive policy framework. Notably, the attention model \citep[AM;][]{kool2018attention} has been pivotal in vehicle routing problems (VRPs), establishing a new standard for stabilizing REINFORCE training. Subsequent advancements in AM have included enhanced decoder architecture and decoding procedure \citep{son2023solving, xin2021multi,luo2023neural,drakulic2023bq}, improved REINFORCE \citep{williams1992simple} baseline assessments \citep{kwon2020pomo, kim2022sym}, leveraging symmetries for sample efficiency \citep{kim2023enhancing}, the application of population-based reinforcement learning (RL) \citep{grinsztajn2022poppy}, ensemble-based learning \citep{jiang2023ensemblebased, zhou2024mvmoe}. Recent advancements in hierarchical algorithms exploit AM-based policies for solution optimization: one policy broadly generates solutions while another refines them locally \citep{kim2021learning,hou2022generalize,ye2024glop, zheng2024udc}.

Graph combinatorial optimization, particularly in problems like maximum independent sets, has diversified from contrive VRPs policies due to their local decomposability and the relative ease of training value functions for partial solutions where VRPs are not locally decomposable due to their global contains. Starting with the influential works by \citet{khalil2017learning}, which employed deep Q networks \citep{mnih2013playing} for graph policy training, creating solutions constructively, \citet{ahn2020learning} introduced an innovative Markov Decision Process (MDP) to generate multiple solution components per iteration by exploiting the local decomposability of graph combinatorial problems. \citet{zhang2023let} further developed this concept, training their MDP with forward-looking GFlowNets \citep{pan2023better}.

Our research aligns with the deep constructive policy framework, where we build a solution from an initial empty solution ($s_0$) based on the policy trained with GFlowNets. Additionally, our method is included in the heatmap-based approaches, detailed in \Cref{app:heatmap}.

\subsection{Deep improvement policy for neural combinatorial optimization}

The improvement policy focuses on refining complete solutions, similar to local search methods like 2opt \citep{Croes58}. Its advantage lies in achieving comparable performances to constructive policies, especially when allotted sufficient time for extensive improvement iterations. Researchers typically adapt existing heuristic local search techniques, creating learning policies that emulate these methods through maximizing expected returns via reinforcement learning \citep{d2020learning,NLNS,ma2021learning,ma2022efficient,ma2023learning,kim2022neuro}. Also, recent works have explored active search to refine solutions via policy adjustments \citep{hottung2021efficient, son2023meta} or sampling the conditions from a continuous latent space \citep{chalumeau2023combinatorial}. Moreover, alternative strategies using discrete Langevin sampling instead of neural networks have shown promising outcomes \citep{pmlr-v202-sun23c}. This approach serves as a complementary extension to deep constructive policies, as it often enhances solutions initially formulated by constructive methods. Additionally, our method, which integrates local search into the training loop, could see further enhancements if advancements in this area are incorporated into our training procedures.

\subsection{Heatmap-based neural combinatorial optimization}\label{app:heatmap}

The heatmap-based method \citep{joshi2021learning} entails the acquisition of a graph representation, referred to as the ``Heatmap," from input graph instances. This Heatmap serves as a prior distribution, influencing the generation of a solution. However, this approach is not self-contained and requires an additional solution construction step, which may involve techniques like beam search \citep{joshi2021learning}, dynamic programming \citep{kool2022deep}, Monte Carlo tree search \citep{fu2021generalize, qiu2022dimes, sun2023difusco, xia2024position} or greedy search with a guide using an objective function \citep{li2023from}. The key advantage of the heatmap-based method lies in its exceptional generalizability when compared to constructive policies. It operates at an abstract level, free from the constraints inherent in generating actual combinatorial solutions.

Our approach falls under heatmap-based methodologies, where we learn a heuristic metric (i.e., heatmap) and employ ant colony optimization for solution construction. A key advantage of our method over many other supervised learning-based heatmap approaches is its independence from optimally labeled data. 

\subsection{GFlowNets for neural combinatorial optimization}

Originally proposed as a framework for learning policies over discrete compositional variables~\citep{bengio2021flow}, GFlowNets have since been applied to combinatorial optimization problems~\citep{zhang2023let,zhang2022robust}. They offer unique advantages over reinforcement learning methods, particularly in generating diverse candidate solutions and enabling off-policy training. Although prior studies have explored the diversity aspect~\citep{zhang2023let,zhang2022robust}, our work is, to our knowledge, the first to empirically validate the benefits of off-policy training in combinatorial optimization settings.

To underscore our contribution, we conceptually compare our method against two recent approaches that incorporate off-policy training for GFlowNets: local search GFlowNets (LS-GFN)~\citep[LS-GFN;][]{kim2023local} and the hindsight-like off-policy strategy proposed by~\citet{zhu2023sample}. Note that, in \cref{sec:exp-fl-db}, we provide an empirical comparison of our approach with FL-DB~\citep{zhang2023let} and VarGrad~\citep{zhang2022robust}.

The key difference between LS-GFN and our approach is that our method addresses conditional inference problems. This requires training conditional GFlowNets that operate under distinct reward landscapes for different conditions, whereas LS-GFN is designed for unconditional problems. Moreover, the local search operators themselves differ. LS-GFN uses a back-and-forth mechanism that partially backtracks trajectories and reconstructs them using its own policy, thereby limiting exploration to sequences sharing identical root segments. In contrast, our local search methods, such as the 2-opt operator for the TSP, explore the solution space within a Hamming ball, allowing for more diverse exploration without the constraint of shared initial sequences. Furthermore, our destroy-and-repair local search method (\cref{append:destroy-and-repair}) can be viewed as an extension of LS-GFN, incorporating symmetricity-aware destruction to mitigate the limitations of the back-and-forth operator.

In \citet{zhu2023sample}, they tries to model a family of policies using hypernetworks to tackle multi-objective optimization problems. They also propose a hindsight-like off-policy strategy, which enables sharing of off-policy samples among policies configured with varying objective mixing coefficients, thereby enhancing sample efficiency during training. In contrast, our method leverages hindsight observations from a (deterministic) local search (\cref{sec:stepA}). Specifically, we adjust the energy of an original sample by incorporating the energy after local search, thereby quantifying how readily the sample can be converted into a low-energy solution.

\end{document}